\def\BibTeX{{\rm B\kern-.05em{\sc i\kern-.025em b}\kern-.08em
    T\kern-.1667em\lower.7ex\hbox{E}\kern-.125emX}}
\theoremstyle{plain}
\newtheorem{theorem}{Theorem}[section] 
\newif\if@restonecol
\newtheorem{corollary}[theorem]{Corollary}
\DeclareMathOperator*{\argmax}{arg\,max}
\newtheorem{definition}{Definition}
\begin{document}
\title{{CombAlign}: Enhancing Model Expressiveness in Unsupervised Graph Alignment}
\author{Songyang~Chen,
Yu~Liu, 
Lei~Zou,
Zexuan~Wang,
Youfang~Lin

\IEEEcompsocitemizethanks{\IEEEcompsocthanksitem Songyang Chen, Yu Liu, Zexuan Wang, and Youfang Lin are with the School of Computer and Information Technology, Beijing Jiaotong University, Beijing 100044, China, and the Beijing Key Laboratory of Traffic Data Analysis and Mining, Beijing 100044, China. 
Lei Zou is with Peking University, Beijing 100871, China. 
Yu Liu is the corresponding author. 
\protect\\
E-mail: songyangchen@bjtu.edu.cn; yul@bjtu.edu.cn;  zoulei@pku.edu.cn; zexuanwang@bjtu.edu.cn; yflin@bjtu.edu.cn.
}
\thanks{Manuscript received xx xx, xxxx; revised xx xx, xxxx.}
}

\markboth{Journal of \LaTeX\ Class Files,~Vol.~18, No.~9, September~2020}%
{How to Use the IEEEtran \LaTeX \ Templates}

\maketitle

\begin{abstract}
Unsupervised graph alignment finds the node correspondence between a pair of attributed graphs by only exploiting graph structure and node features. 
One category of recent studies first computes the node representation 
and then matches nodes with the largest embedding-based similarity, while the other category reduces the problem to optimal transport (OT) via Gromov-Wasserstein learning.  
However, it remains largely unexplored in the model expressiveness, as well as how theoretical expressivity impacts prediction accuracy.  
We investigate the model expressiveness from two aspects. 
First, we characterize the model’s \emph{discriminative power} in distinguishing matched and unmatched node pairs across two graphs. 
Second, we study the model’s capability of guaranteeing \emph{node matching properties} such as {one-to-one} matching and mutual alignment. 
Motivated by our theoretical analysis, we put forward a hybrid approach named \texttt{CombAlign} with stronger expressive power.  
Specifically, we enable cross-dimensional feature interaction for OT-based learning and propose an embedding-based method inspired by the Weisfeiler-Lehman test. We also apply non-uniform marginals obtained from the embedding-based modules to OT as priors for more expressiveness. 
Based on that, we propose a traditional algorithm-based refinement, which combines our OT and embedding-based predictions using the ensemble learning strategy and reduces the problem to maximum weight matching. 
With carefully designed edge weights, we ensure those matching properties and further enhance prediction accuracy.
By extensive experiments, we demonstrate a significant improvement of 14.5\% in alignment accuracy compared to state-of-the-art approaches and confirm the soundness of our theoretical analysis.
\end{abstract}

\begin{IEEEkeywords}
Unsupervised graph alignment, Model expressiveness, Gromov-Wasserstein discrepancy, Graph learning
\end{IEEEkeywords}

\section{Introduction}\label{sec:introduction}
\IEEEPARstart{T}{he} unsupervised graph alignment problem predicts the node correspondence between two attributed graphs, given their topological structure and node features as inputs. 
It has a wide range of applications such as linking the same identity across different social networks~\cite{zhang2015multiple,li2019partially,slotalign}, matching scholar accounts between multiple academic platforms~\cite{tang2008arnetminer,zhang2021balancing}, and various computer vision tasks~\cite{bernard2015solution, haller2022comparative}.
Since the nodes to be aligned do not necessarily have identical graph structure and node features, 
the problem is challenging, especially in the \emph{unsupervised} setting without any known node correspondence.

\begin{figure*}[t]
 \centering
    \begin{tabular}{ccc}
        \hspace{0mm} \includegraphics[height=30mm]{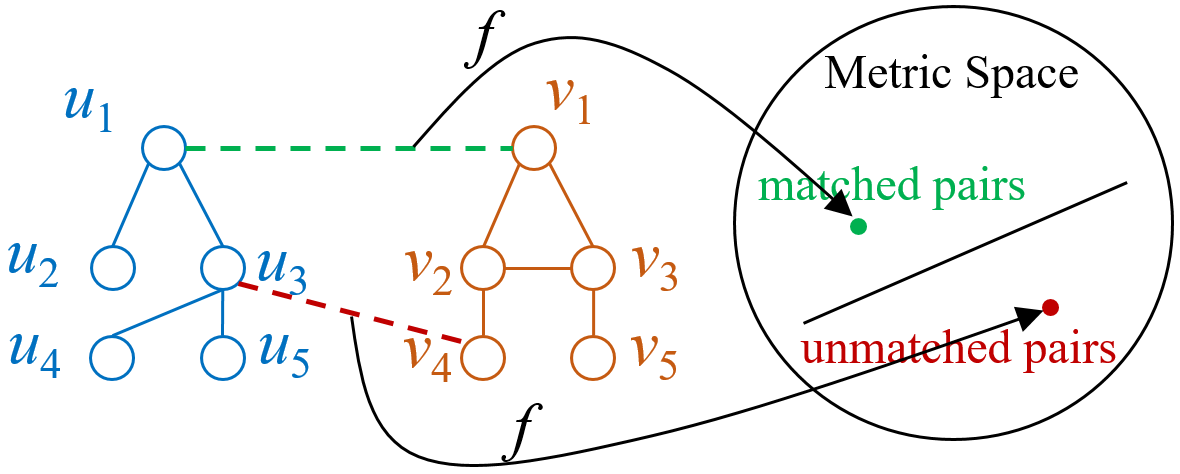} &
        \hspace{-2mm} \includegraphics[height=27mm]{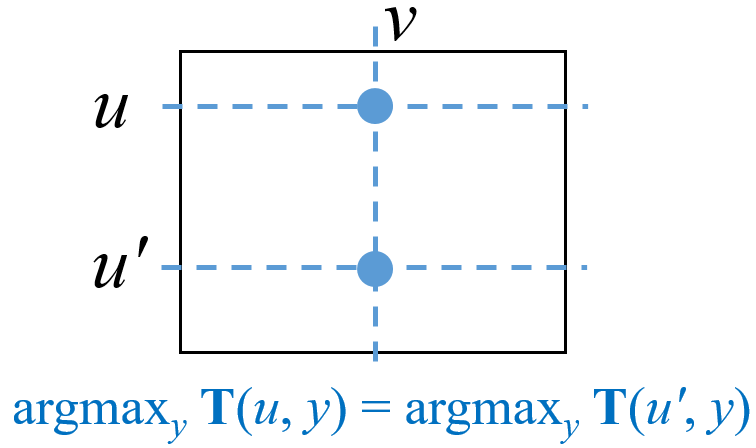} &
        \hspace{-4mm} \includegraphics[height=30mm]{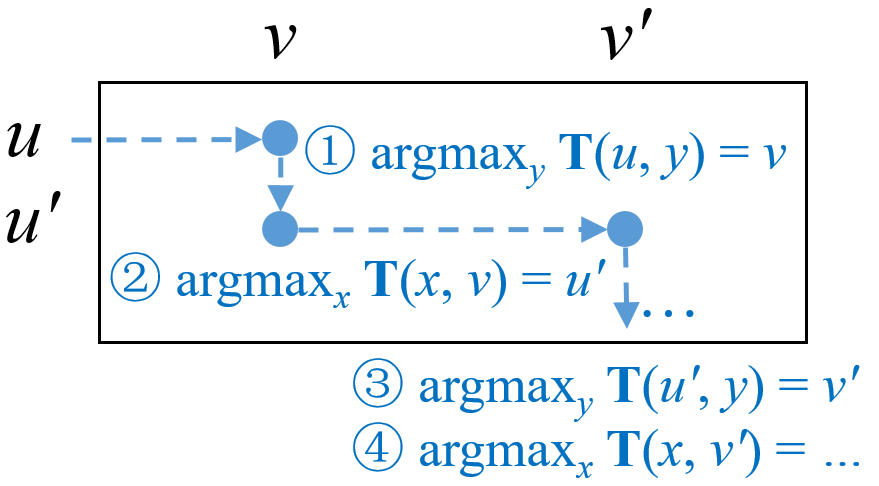} \\

       \hspace{0mm} (a) {Separating matched and unmatched node pairs} &
       \hspace{0mm} (b) {One-to-many prediction} &
       \hspace{0mm} (c) {Mutual alignment} \\
 \end{tabular}
\vspace{1mm}
\caption{Illustration of model expressiveness for graph alignment, where (a) denotes the model's capability of separating matched and unmatched node pairs, while (b)~\&~(c) showcase two disadvantages of pure learning-based approaches, i.e., one-to-many prediction regardless of the one-to-one matching constraint, and the inconsistency between row and column-wise alignment.} \label{fig:intro}
\end{figure*}

Significant research efforts have been dedicated to the unsupervised graph alignment problem. 
Early work~\cite{feizi2019spectral,karakasis2021joint,peng2010new, ref3, ref23, ref40} often formulates the task as the maximum common subgraph isomorphism or the quadratic assignment problem, of which the complexities are NP-hard. In recent years, learning-based methods~\cite{galign,walign,gtcalign,gwl,slotalign,dhot} have garnered increasing attention, surpassing traditional solutions in terms of alignment accuracy. 
One prominent category of them~\cite{galign,walign,gtcalign} relies on the \emph{``embed-then-cross-compare''} paradigm. 
These methods first learn the embedding of each node in both graphs by exploiting the graph structure and node feature information, for example, with graph neural networks (GNNs). 
Then, two nodes are matched if their embeddings are close according to some specific metric such as cosine similarity~\cite{gtcalign}.  
However, for the unsupervised scenario, it is non-trivial to design an optimization objective that is fully suitable for the graph alignment task without any known node correspondence, which poses challenges for learning high-quality embeddings.
The most recent work~\cite{gtcalign} employs a parameter-free approach without an explicit objective to achieve state-of-the-art performance.  
Another recent line of research~\cite{gwl,slotalign,fusegwd,dhot} generally models the graph alignment problem via \emph{optimal transport (OT)} and achieves promising accuracy due to its well-defined objective. 
Given the marginal distributions on two finite sets and a predefined transformation cost between them, the OT problem finds the joint distribution to minimize the total transport cost~\cite{WD}. 
It has been shown that unsupervised graph alignment can be reduced to OT with the help of Gromov-Wasserstein discrepancy (GWD)~\cite{gwl,slotalign,peyre2016gromov}, which shares similar ideas with node alignment~\cite{slotalign,gwl}.  
For approaches of this paradigm, the crux lies in 
the definition of pair-wise cost within each graph (i.e., \emph{intra-graph} cost), then the transport cost (i.e., \emph{inter-graph} cost) is computed accordingly. 
We have witnessed an increasingly sophisticated design of intra-graph cost~\cite{gwl, slotalign, dhot} for better matching accuracy. 

Unfortunately, it remains largely unexplored for the \emph{theoretical capability} of deep learning-based solutions. 
We first investigate the model expressiveness for unsupervised graph alignment in the ability to distinguish matched and unmatched node pairs. 
As shown in Figure~\ref{fig:intro}(a) where $u_i$ is assumed to align to $v_i$ for $i \in \{1, \ldots, 5\}$, we see both embedding and optimal transport-based methods as learning a (parameterized) function $f$ that maps two nodes in different graphs to some metric space. 
For the implementation of $f$, embedding-based approaches usually compute the similarity between node pairs from the node representation obtained by graph neural networks (GNNs), while optimal transport-based approaches predict the pair-wise alignment probability according to the transport cost between them. 
To this end, the model accuracy is suggested to be closely related to the discriminative power of the specific design of $f$. 

Next, we regret to find that two major disadvantages exist for pure learning-based approaches, which violate the property of graph alignment and deteriorate accuracy. 
On the one hand, we note that the \emph{one-to-one} matching constraint is forced in scenarios such as the alignment of the protein-protein interaction networks~\cite{kolavr2012graphalignment} because the conservative functional modules must be uniquely mapped. 
Although being the \emph{de facto} setting for most studies~\cite{walign,dhot,slotalign,gtcalign,gwl,zeng2023parrot,zhang2016final,yan2021bright,liu2023wl,ning2023graph,huynh2021network}, these methods typically produce one-to-many predictions (Figure~\ref{fig:intro}(b)) since they compute a matrix of matching probability for every pair of nodes and then take the row/column-wise maximum as the prediction.  
On the other hand, as aligned nodes are paired up in nature, the alignment is \emph{mutual} (i.e., bidirectional). 
However, predicting the alignment directly via a probability-based approach might leads to inconsistency. 
As shown in Figure~\ref{fig:intro}(c), for node $u$, its matched node (i.e., $v$) should be aligned to another node $u'$ according to the alignment probabilities. 
We tackle these disadvantages of pure learning-based approaches to enhance model's theoretical soundness and prediction accuracy.

Motivated by our viewpoint of model expressiveness, we improve the graph alignment process from the following aspects. 
First, we enhance the optimal transport-based paradigm~\cite{slotalign, dhot, gwl}, which has a well-defined objective, by incorporating \emph{feature transformation} along with the well-adopted feature propagation~\cite{slotalign, dhot} in the transport cost design of GW learning. 
We prove that more discriminative power is obtained by enabling the cross-dimension mixture of features, which helps to distinguish matched and unmatched node pairs. 
As we demonstrate in Section~\ref{sec:experiments}, with this single optimization, our model outperforms existing OT and embedding-based state of the art. 
Second, instead of proposing complicated heuristics (e.g.,~\cite{gtcalign}) under the embedding-based framework, we opt for a simple yet effective approach by using parameter-free GNNs to simulate the Weisfeiler-Lehman (WL) algorithm~\cite{wl-test}. 
We also exploit {embedding-based prior knowledge} to improve the well-formed OT objective, more specifically, we compute \emph{non-uniform marginals} for OT, a vital input to the learning process.  
The learning algorithm is proved to be more expressive by optimizing an orthogonal aspect to feature transformation. 
Third, to solve the disadvantages in Figure~\ref{fig:intro}(b)~\&~(c), we reduce graph alignment to the \emph{maximum weight matching} problem~\cite{km,km2} by properly establishing a set of weighted edges between two node sets. 
To define the edge weights, which determine the matching accuracy, we combine the predictions of our OT and embedding-based procedures via an {ensemble learning} strategy~\cite{zhou2012ensemble,dietterich2000ensemble} named \emph{stacking}. 
Our model not only guarantees the matching properties but also improves prediction accuracy by combining the best of both worlds. 
We present a model framework named \texttt{CombAlign} to unify the aforementioned modules in a principled way.

Our contributions for the unsupervised graph alignment problem are summarized as follows: 
\begin{itemize} 
\item We combine optimal transport and embedding-based approaches to improve the model expressiveness in \emph{distinguishing matched and unmatched node pairs}. In particular, we demonstrate the necessity of feature transformation and non-uniform marginals in the learning process. 

\item We transform unsupervised graph alignment into a maximum-weight matching problem with an ensemble learning strategy to integrate embedding and OT-based predictions. This not only ensures important matching properties but also enhances prediction accuracy.

\item We propose \texttt{CombAlign}, a unified framework for unsupervised graph alignment to fully integrate the advantages of embedding-based, OT-based, and traditional algorithm-based solutions.

\item Extensive experiments are conducted to evaluate our algorithm against both embedding and OT-based state of the art. Our method improves alignment accuracy by a significant margin within less time, while a detailed evaluation shows the effectiveness of each proposed module.
\end{itemize}

\section{Preliminary} \label{prelim}

\noindent \textbf{Problem Statement}.
We denote an undirected and attributed graph as $\mathcal{G} = (\mathcal{V}, \mathcal{E}, \mathbf{X})$ with node set $\mathcal{V}$ of size $n$, edge set $\mathcal{E}$ represented by the adjacency matrix $\mathbf{A} \in \{0,1\}^{n \times n}$, and  
$\mathbf{X} \in \mathbb{R}^{n \times d}$ corresponds to the node feature matrix with $d$-dimensional attribute vectors.  
We list frequently used notations in Table~\ref{tbl:def-notation}.

\begin{definition} [Unsupervised Graph Alignment]
Given source graph $\mathcal{G}_s$ and target graph $\mathcal{G}_t$, without any known anchor node pairs, unsupervised graph alignment returns a set of matched node pairs $\mathcal{M}$. 
For each node pair $(u_i, v_k) \in \mathcal{M}$, we have $u_i \in \mathcal{G}_s$ and $v_k \in \mathcal{G}_t$.

\end{definition}

We assume that $n_1 = |\mathcal{V}_s|, n_2 = |\mathcal{V}_t|$, and $n_1 \leq n_2$ w.l.o.g. 
In particular, we focus on the setting of \emph{one-to-one} node alignment, following most existing studies~\cite{yan2021bright, zeng2023parrot, walign, galign, gtcalign, gwl, slotalign, dhot}. 
That is, every node can appear at most once in $\mathcal{M}$. 
Instead of directly computing $\mathcal{M}$, existing learning-based approaches~\cite{walign, galign, gtcalign, gwl, slotalign, dhot} predict an \emph{alignment probability matrix} $\mathbf{T}$ of size $n_1 \times n_2$, where $\mathbf{T}(i, k)$ denotes the probability that $u_i \in \mathcal{G}_s$ is matched to $v_k \in \mathcal{G}_t$. 
Next, it is sufficient to set $\mathcal{M} (u_i) = \argmax_{k} \mathbf{T}(i, k)$. Similar to existing works~\cite{slotalign,dhot,galign,walign,gtcalign}, this work primarily focuses on attributed graphs.

\noindent \textbf{Gromov-Wasserstein (GW) Learning}. The discrete form of the Gromov-Wasserstein discrepancy is defined as follows.

\begin{definition}[Gromov-Wasserstein Discrepancy (GWD)~\cite{gwl}]\label{def:GWD}
Given the distribution $\boldsymbol{\mu}$ (resp. $\boldsymbol{\nu}$) over $\mathcal{V}_s$ (resp. $\mathcal{V}_t$), the GW discrepancy between $\boldsymbol{\mu}$ and $\boldsymbol{\nu}$ is defined as
\begin{align} \label{eqn:GWD}
\min_{\mathbf{T} \in \Pi(\boldsymbol{\mu}, \boldsymbol{\nu})}& \sum_{i=1}^{n_1} \sum_{j=1}^{n_1} \sum_{k=1}^{n_2} \sum_{l=1}^{n_2} { {|\mathbf{C}_s(i, j) - \mathbf{C}_t(k, l)|}^2 \mathbf{T}(i, k) \mathbf{T}(j, l) }, \\
& \text{s.t.} \mathbf{T}\boldsymbol{1}_{n_2} = \boldsymbol{\mu}, \mathbf{T}^{\intercal} \boldsymbol{1}_{n_1} = \boldsymbol{\nu}. \nonumber
\end{align}
\end{definition}
Here, $\mathbf{C}_s \in \mathbb{R}^{n_1 \times n_1}$ and $\mathbf{C}_t \in \mathbb{R}^{n_2 \times n_2}$ are the \emph{intra-graph costs} for $\mathcal{G}_s$ and $\mathcal{G}_t$, respectively, which measure the similarity (or distance) of two nodes {within each graph}~\cite{peyre2016gromov}. 
We have $\sum_{i=1}^{n_1} \sum_{k=1}^{n_2} {\mathbf{T}(i, k)} = 1$ according to the constraints in Equation~\ref{eqn:GWD}, i.e., $\mathbf{T}$ is the joint probability distribution over two node sets, and $\boldsymbol{1}_n$ denotes the all-ones vector in $\mathbb{R}^n$. 
Let $\mathbf{C}_{gwd} \in \mathbb{R}^{n_1 \times n_2}$ be the \emph{inter-graph cost matrix}~\cite{gwl, peyre2016gromov}:
\begin{equation} \label{eqn:GWD-ik}
\mathbf{C}_{gwd}(i, k) = \sum_{j=1}^{n_1} \sum_{l=1}^{n_2} { {|\mathbf{C}_s(i, j) - \mathbf{C}_t(k, l)|}^2 \mathbf{T}(j, l) }.
\end{equation}
The above definition of $\mathbf{C}_{gwd}$ can be interpreted as follows. 
Noticing that Equation~\ref{eqn:GWD} can be reformulated as 
\begin{equation} \label{eqn:GWD-2}
\langle \mathbf{C}_{gwd}, \mathbf{T}\rangle = \sum_{i=1}^{n_1} \sum_{k=1}^{n_2} \mathbf{C}_{gwd}(i, k) \mathbf{T}(i, k).
\end{equation}
To minimize this objective with the constraints of $\mathbf{T}$, a negative correlation between the values of $\mathbf{C}_{gwd}(i, k)$ and $\mathbf{T}(i, k)$ is encouraged~\cite{WD}. More precisely, for likely matched node pairs $(u_i, v_k)$ and $(u_j, v_l)$, ${|\mathbf{C}_s(i, j) - \mathbf{C}_t(k, l)|}^2$ should be small, i.e., the values of $\mathbf{C}_s(i, j)$ and $\mathbf{C}_t(k, l)$ are close~\cite{gwl, slotalign}.

\noindent \textbf{The Optimal Transport (OT) Problem}. Actually, Equation~\ref{eqn:GWD-2} can be interpreted via optimal transport. 
Given $\mathbf{C} \in \mathbb{R}^{n_1 \times n_2}$ which represents the cost of transforming probability distribution $\boldsymbol{\mu}$ to $\boldsymbol{\nu}$, we compute the joint probability distribution $\mathbf{T} \in \mathbb{R}^{n_1 \times n_2}$ so that the total transport cost is minimized. 
It can be solved by the Sinkhorn algorithm~\cite{sinkhorn,sinkhorn-knopp} via an iterative procedure.  
With learnable costs, existing solutions~\cite{gwl, slotalign, dhot} adopt the proximal point method~\cite{ppm} to reduce GW learning to OT and to learn the alignment probability and parameters in the cost term jointly.

\noindent \textbf{Graph Neural Network (GNN)}. To integrate the local graph structures and node features for node representation learning, one prominent approach is the graph neural networks~\cite{corso2024graph,lv2023data,duong2021efficient}, which generally contains two steps, namely, \emph{feature propagation} (i.e., message passing) across neighbors and \emph{feature transformation} with a learnable transformation matrix.

\begin{table}[!t]
\centering
\begin{small}
\caption{Table of notations.}\label{tbl:def-notation} 
\begin{tabular}{p{1.5cm} p{6.6cm}}
\toprule
\textbf{Notation} & \textbf{Description} \\
\midrule
$\mathcal{G}_s, \mathcal{G}_t$  &  The source and target graphs \\ 
$\mathbf{A}_p, \mathbf{X}_p$    &  Adjacency matrix, node feature matrix ($p=s,t$) \\ 
$u_i, u_j, v_k, v_l$            &  Nodes with $u_i, u_j \in \mathcal{G}_s$ and $v_k, v_l \in \mathcal{G}_t$ \\ 
$\mathbf{Z}_p, \mathbf{H}_p, \mathbf{R}_p$   &  Embeddings computed by GNN, GNN w/o training, and feature propagation, respectively ($p=s,t$) \\ 
$\mathbf{C}_s, \mathbf{C}_t, \mathbf{C}_{gwd}$    &  Intra-graph and inter-graph costs for GW learning \\ 
$\mathbf{T}_{WL}, \mathbf{T}_{GW}$        &   The alignment probability matrices  \\ 
$f(\cdot), g(\cdot)$    & Functions with and without learnable parameters \\ 
\bottomrule
\end{tabular}
\end{small}
\end{table}

\vspace{-4mm}
\section{Our Proposed Model} \label{sec:model}

\subsection{Model Overview} \label{sec:model-overview}

\begin{figure*}
    \centering
    \includegraphics[width=\linewidth]{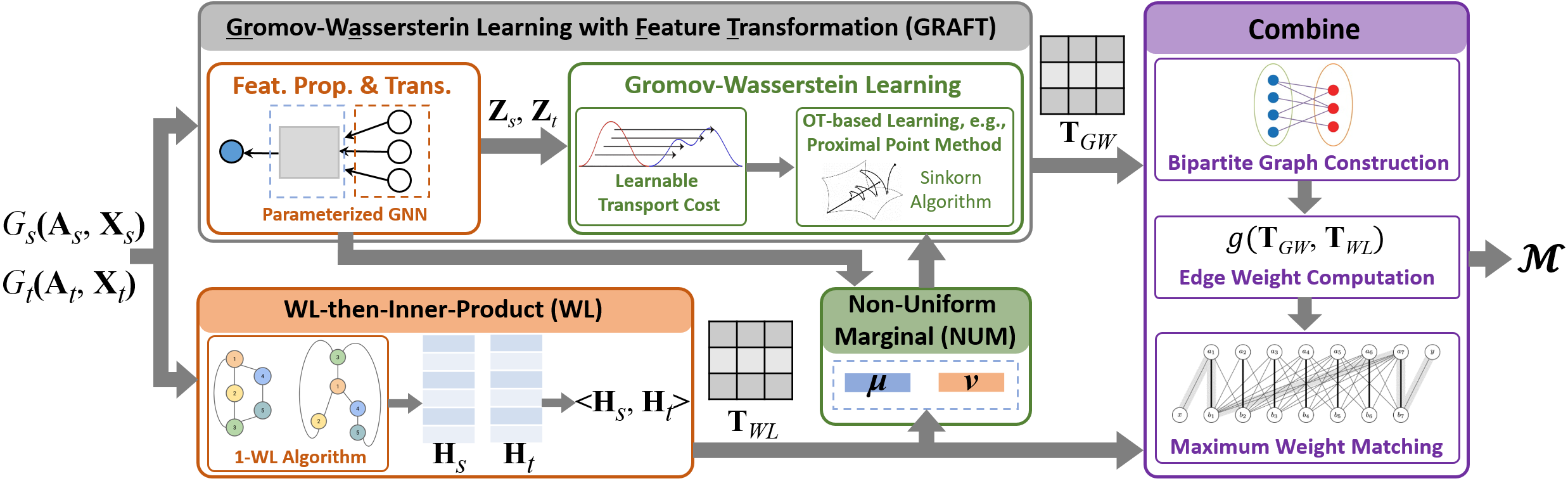}
    \caption{The overall framework of \texttt{CombAlign}. Modules in orange, green, and purple belong to the embedding-based, OT-based, and traditional algorithm-based approaches, respectively.} 
    \label{fig:framework}
\end{figure*}

We present an overview of our \texttt{CombAlign} model, which incorporates embedding-based, OT-based, and traditional techniques to tackle unsupervised graph alignment with theoretical improvements. 
The framework is shown in Figure~\ref{fig:framework}. 

As illustrated in Algorithm~\ref{alg: CombAlign}, given graphs $\mathcal{G}_s$ and $\mathcal{G}_t$, \texttt{CombAlign} first predicts the alignment probabilities by employing two separate approaches. 
Specifically, the \emph{WL-then-Inner-Product} (\texttt{WL}) module obtains the embedding-based alignment probability $\mathbf{T}_{WL}$ by simulating the Weisfeiler-Leman (WL) algorithm~\cite{wl-test} (Line 1). 
Based on that, the \emph{Non-Uniform Marginal} (\texttt{NUM}) module computes two marginal distributions on node sets, which serve as inputs to the next step (Line 2).  
The following \emph{Unsupervised GW Learning with Feature Transformation} (\texttt{GRAFT}) module derives an OT-based prediction $\mathbf{T}_{GW}$ with enhanced model expressivity (Line 3).  
Finally, the \texttt{Combine} module integrates both alignment matrices $\mathbf{T}_{WL}$ and $\mathbf{T}_{GW}$, and employs a traditional algorithm-based solution to further improve matching accuracy (Lines 4-5), as well as to guarantee alignment properties (Cf. Figure~\ref{fig:intro}).

\begin{algorithm}
  \caption{The \texttt{CombAlign} Algorithm} \label{alg: CombAlign}
  \KwIn{Attributed graphs $\mathcal{G}_s(\mathbf{A}_s, \mathbf{X}_s)$ and $\mathcal{G}_t(\mathbf{A}_t, \mathbf{X}_t)$ }
  \KwOut{The set of predicted node matching $\mathcal{M}$}
  $\mathbf{T}_{WL} \leftarrow \texttt{WL}(\mathcal{G}_s, \mathcal{G}_t)$; \tcp{Algorithm~\ref{alg:WL}} \
  $\boldsymbol{\mu}, \boldsymbol{\nu} \leftarrow \texttt{NUM}(\mathbf{T}_{WL})$; \tcp{Algorithm~\ref{alg:NUM}} \
  $\mathbf{T}_{GW} \leftarrow \texttt{GRAFT}(\mathcal{G}_s, \mathcal{G}_t, \boldsymbol{\mu}, \boldsymbol{\nu})$; \tcp{Algorithm~\ref{alg:GRAFT}} \
  $\mathcal{M} \leftarrow \texttt{Combine}(\mathbf{T}_{WL}, \mathbf{T}_{GW})$; \tcp{Algorithm~\ref{alg:EL}} \
\Return $\mathcal{M}$;
\end{algorithm}

\vspace{-3mm}
\subsection{Improving Model's Discriminative Power} \label{sec:model-1} 
We elaborate on the designed modules by focusing on their discriminative power for graph alignment. 
We first introduce the \texttt{GRAFT} module followed by the \texttt{WL} and \texttt{NUM} modules as the latter two further enhance the expressiveness of the former. 
\subsubsection{The GW Learning (\texttt{GRAFT}) Module} \label{sec:model-GRAFT}
It comprises embedding-based feature propagation and transformation followed by the GW learning step with the idea of OT.

\noindent \textbf{Feature Propagation and Transformation}. 
As pointed out by~\cite{slotalign}, for OT-based approaches, the alignment quality heavily depends on the specific design of intra-graph costs $\mathbf{C}_s$ and $\mathbf{C}_t$. 
Representative methods~\cite{gwl, slotalign, dhot} integrate graph structures and node features as the following equation: 
\begin{equation}
\mathbf{R}_p = g_{prop}(\mathbf{A}_p) \mathbf{X}_p, p=s,t, 
\end{equation}
where $g_{prop}(\cdot)$ is a general function without learnable parameters. For example, we can set it to the standard propagation of GCN~\cite{gcn}, i.e., $ g_{prop}(\mathbf{A}) = {\tilde{\mathbf{D}}}^{-\frac{1}{2}} \tilde{\mathbf{A}} {\tilde{\mathbf{D}}}^{-\frac{1}{2}}$ with $\tilde{\mathbf{D}} = \tilde{\mathbf{A}} \mathbf{1}$ and $\tilde{\mathbf{A}} = \mathbf{A} + \mathbf{I}$.
We use $\mathbf{R}_s \in \mathbb{R}^{n_1 \times d}$ and $\mathbf{R}_t \in \mathbb{R}^{ n_2 \times d }$ to denote the result embedding matrices and refer to the process as \emph{feature propagation}\footnote{Note that~\cite{slotalign} uses parameter-free GNN to denote this process. In this paper, we propose strict definitions for both terms and formally analyze their differences.}. 

We observe that this step only enables the propagation of features between a node and its neighbors in a \emph{dimension-by-dimension} manner. 
However, interaction across different feature dimensions is not considered, which is also important, for example, when two feature dimensions share similar semantics. 
We put forward feature transformation, which is widely adopted by classical GNNs~\cite{gcn,gat,gin} and is formulated as
\begin{equation}
\mathbf{Z}_p = f_{GNN} (\mathbf{A}_p, \mathbf{X}_p, \mathbf{W}), p=s,t.
\end{equation}
Here, $f_{GNN}$ is a learnable function, and $\mathbf{W}$ denotes the learnable transformation matrices and is shared by two graphs, which enables feature interaction \emph{across dimensions}.
The following theorem demonstrates that, within the GW learning framework, the adoption of a single transformation layer for feature interaction enhances the discriminative power of the intra-graph cost matrices.

\begin{theorem} \label{thm:T1}
We are given the graph structures $\mathbf{A}_s, \mathbf{A}_t$ and node features $\mathbf{X}_s, \mathbf{X}_t$ as input. 
Denote feature propagation as $\mathbf{R}_p = g(\mathbf{A}_p) \mathbf{X}_p, p=s,t$, where $g(\cdot)$ is a function without learnable parameters. 
Denote the additional linear transformation as $\mathbf{Z}_p = \mathbf{R}_p \mathbf{W}$, where $\mathbf{W} \in \mathbb{R}^{d \times d}$ is the learnable matrix. 
Assume that we set the intra-graph cost matrices as $\mathbf{C}'_p = \mathbf{R}_p\mathbf{R}^\intercal_p$ and $\mathbf{C}_p = \mathbf{Z}_p\mathbf{Z}^\intercal_p$ for $p=s,t$, respectively, and let $(u_i, v_k), (u_j, v_l) \in \mathcal{M}^* \text{ and } (u_{j'}, v_l) \notin \mathcal{M}^*$ where $\mathcal{M}^*$ is the ground truth. 
Then, there exists a case that $|\mathbf{C}'_s(i, j) - \mathbf{C}'_t(k, l)| = |\mathbf{C}'_s(i, j') - \mathbf{C}'_t(k, l)|$ and $|\mathbf{C}_s(i, j) - \mathbf{C}_t(k, l)| \neq |\mathbf{C}_s(i, j') - \mathbf{C}_t(k, l)|$.
\end{theorem}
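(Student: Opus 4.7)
The plan is to prove the statement by explicit construction in dimension $d = 2$, since the theorem only claims existence of some input configuration. First I would fix concrete propagated feature rows $\mathbf{R}_s(i), \mathbf{R}_s(j), \mathbf{R}_s(j'), \mathbf{R}_t(k), \mathbf{R}_t(l) \in \mathbb{R}^2$ together with a transformation matrix $\mathbf{W} \in \mathbb{R}^{2\times 2}$, and then verify the two claimed equations by direct computation. Because the theorem does not constrain $g(\cdot)$ beyond being parameter-free, I can realize any target rows of $\mathbf{R}_p = g(\mathbf{A}_p)\mathbf{X}_p$ by choosing $\mathbf{X}_p$ to already equal these rows and taking $\mathbf{A}_p$ such that $g(\mathbf{A}_p) = \mathbf{I}$ (or absorbing $g(\mathbf{A}_p)$ into $\mathbf{X}_p$), so only the propagated feature rows themselves need to be designed.

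The conceptual observation driving the construction is that $\mathbf{C}'_p$ uses the plain inner product $\langle x,y\rangle$, while $\mathbf{C}_p$ uses the quadratic form $x^\intercal \mathbf{M} y$ with $\mathbf{M} = \mathbf{W}\mathbf{W}^\intercal$, and the former is a rigid specialization of the latter. A coincidence $\langle \mathbf{R}_s(i), \mathbf{R}_s(j)\rangle = \langle \mathbf{R}_s(i), \mathbf{R}_s(j')\rangle$ can occur even when $\mathbf{R}_s(j) \neq \mathbf{R}_s(j')$, and it is generically broken once the coordinates are reweighted by a non-scalar $\mathbf{M}$. A concrete instance: take $\mathbf{R}_s(i) = (1,1)$, $\mathbf{R}_s(j) = (1,1)$, $\mathbf{R}_s(j') = (2,0)$, $\mathbf{R}_t(k) = \mathbf{R}_t(l) = (1,0)$, and $\mathbf{W} = \mathrm{diag}(1,\sqrt{2})$. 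A short check shows $\mathbf{C}'_s(i,j) = \mathbf{C}'_s(i,j') = 2$ and $\mathbf{C}'_t(k,l) = 1$, so both untransformed absolute differences equal $1$. After transformation, $\mathbf{C}_s(i,j) = 3$, $\mathbf{C}_s(i,j') = 2$, and $\mathbf{C}_t(k,l) = 1$ (the last is unchanged because the vanishing second coordinates of $\mathbf{R}_t(k)$ and $\mathbf{R}_t(l)$ are insensitive to $\mathbf{M}$), yielding $|\mathbf{C}_s(i,j) - \mathbf{C}_t(k,l)| = 2$ and $|\mathbf{C}_s(i,j') - \mathbf{C}_t(k,l)| = 1$. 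This verifies both conclusions.

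I do not expect a serious technical obstacle, since the claim is an existence statement with ample degrees of freedom, and the construction above is immediate. The only point that requires care is the labelling consistency with the ground truth $\mathcal{M}^*$ — the five feature rows must be assigned to nodes so that $(u_i, v_k)$ and $(u_j, v_l)$ can plausibly belong to $\mathcal{M}^*$ while $(u_{j'}, v_l)$ does not. This is purely a labelling decision and does not affect any computation, as the theorem imposes no additional structural constraint on $\mathcal{M}^*$. If desired, essentially the same argument goes through for any non-scalar diagonal $\mathbf{W}$, which also makes clear that the gained discriminative power is not a quirk of the specific choice but a generic benefit of allowing cross-coordinate reweighting, in line with the narrative preceding the theorem.
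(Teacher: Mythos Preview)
Your construction is correct and cleanly verifies both required conclusions. It does, however, take a genuinely different route from the paper's own argument.

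The paper proves existence via a graph-motivated construction: it takes one-hot node features and places $u_i$ at distance greater than $K$ from both $u_j$ and $u_{j'}$, so that after $K$ steps of propagation the supports of $\mathbf{R}_s(u_i)$ and $\mathbf{R}_s(u_j)$ (resp.\ $\mathbf{R}_s(u_{j'})$) are disjoint, forcing $\mathbf{C}'_s(i,j)=\mathbf{C}'_s(i,j')=0$. A suitable $\mathbf{W}$ then mixes the one-hot coordinates and separates the two inner products. Your approach is purely algebraic: you exhibit explicit $2$-dimensional rows for which the plain inner products happen to coincide, and then a diagonal (non-scalar) $\mathbf{W}$ breaks the tie. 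The paper's construction has the advantage of tying the phenomenon directly to the limited receptive field of feature propagation, which is the intended intuition in the surrounding text; your construction is more elementary, requires no appeal to graph distance or propagation depth, and is verifiable by a one-line computation. Both are valid for the bare existence claim.

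Two small remarks. First, your realization step (``take $g(\mathbf{A}_p)=\mathbf{I}$ or absorb $g(\mathbf{A}_p)$ into $\mathbf{X}_p$'') implicitly treats $g$ as something you may arrange; if $g$ is regarded as fixed, you should note that hitting five prescribed rows of $g(\mathbf{A}_p)\mathbf{X}_p$ is a linear condition on $\mathbf{X}_p$ that is generically solvable, so no loss occurs. Second, in your example the matched pair ends up with the \emph{larger} cost gap, which is the ``wrong'' direction for GW learning; this is harmless because the theorem only asserts $\neq$, and the paper's proof is equally agnostic on this point.
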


\begin{proof} 
First note that if an algorithm is able to separate $ |\mathbf{C}_s(i, j) - \mathbf{C}_t(k, l)|$ and $ |\mathbf{C}_s(i, j') - \mathbf{C}_t(k, l)|$, or more specifically, to learn that $|\mathbf{C}_s(i, j) - \mathbf{C}_t(k, l)| < |\mathbf{C}_s(i, j') - \mathbf{C}_t(k, l)|$, it is more likely to make the correct prediction ($\mathbf{T}(j, l) > \mathbf{T}(j', l)$). 
Now assume that $\mathbf{X}_s$ contains different one-hot features for each node, and the distance between $u_i$ and $u_j$ (resp. $u_{j’}$) is beyond twice of $K$, the maximum step of feature propagation. For simplicity, let $i$, $j$, and $j’$ be the corresponding index. 
Since $u_i$ cannot propagate any information to $u_j$ and $u_{j’}$ (and vice versa), we have $\mathbf{R}_s(u_i, j) = 0, \mathbf{R}_s(u_i, j’) = 0, \mathbf{R}_s(u_j, i) = 0$, and $\mathbf{R}_s(u_{j’}, i) = 0$. 
Thus, we have $\mathbf{R}_s(u_i)^\intercal \mathbf{R}_s(u_j) = \mathbf{R}_s(u_i)^\intercal \mathbf{R}_s(u_{j’}) = 0$, i.e., $ \mathbf{C}'_s(i, j) = \mathbf{C}'_s(i, j') = 0$. 

With the linear transformation matrix $\mathbf{W}$, different dimensions of the one-hot features are directly interacted. We might have $\mathbf{Z}_s(u_i)^\intercal \mathbf{Z}_s(u_j) \neq 0$ and $\mathbf{Z}_s(u_i)^\intercal \mathbf{Z}_s(u_{j’}) = 0$ by setting the corresponding columns of $\mathbf{W}$. 
More generally, by learning $\mathbf{W}$, it is possible to have $\mathbf{Z}_s(u_i)^\intercal \mathbf{Z}_s(u_j) \neq \mathbf{Z}_s(u_i)^\intercal \mathbf{Z}_s(u_{j’})$, and the theorem follows.
\end{proof}

The following corollary can be easily derived. 
\begin{corollary} \label{col:A1}
Assume that the intra-graph cost matrix $\mathbf{C}_p$ is the linear combination of graph structure $\mathbf{A}_p$, feature information $\mathbf{X}_p\mathbf{X}_p^\intercal$, and node embedding $\mathbf{Z}_p\mathbf{Z}_p^\intercal$, i.e.,
\begin{equation} \label{eqn:multi-view}
\mathbf{C}_p = \beta_p^{(1)} \mathbf{A}_p + \beta_p^{(2)} \mathbf{X}_p\mathbf{X}_p^\intercal + \beta_p^{(3)} \mathbf{Z}_p\mathbf{Z}_p^\intercal, p=s,t,
\end{equation}
where we use $\boldsymbol{\beta}_p$ to represent learnable coefficients. If feature transformation is applied to compute $\mathbf{Z}_p$, the costs have more discriminative power in separating the matched and unmatched node pairs.
\end{corollary}

\begin{proof}
For existing work with such cost formation~\cite{slotalign, dhot}, following Theorem~\ref{thm:T1}, it can be easily proved that feature transformation still provides more discriminative power. 
Recall the example in the proof of Theorem~\ref{thm:T1}. In this case, the first two terms are equal to zero, and the corollary holds.
\end{proof}

\begin{corollary} \label{col:GNN}
If the $f_{GNN}$ function has more expressive power in distinguishing node embeddings, e.g.~\cite{gin}, the intra-graph cost has more discriminative power in separating matched and unmatched pairs.
\end{corollary}
\begin{proof} 
For two graphs with $\max(n_1, n_2) = O(n)$, note that we intend to simultaneously hold up to $O(n^2)$ constraints (i.e., matched and unmatched pairs) described in Theorem~\ref{thm:T1} by learning the GNN parameters. 
With a more powerful GNN to distinguish the node embeddings $\mathbf{Z}_p$, the intra-graph cost terms for different node pairs become more separable.
\end{proof}

Motivated by the theoretical results, we employ three specific GNN models for feature propagation and transformation:
\begin{itemize}
\item Lightweight GCN~\cite{wu2019simplifying} (single linear transformation):
        \begin{equation}
       \hspace{-5mm} \mathbf{Z} = \text{Concat}(\mathbf{X}, {\mathbf{P}} \mathbf{X}, \ldots, {\mathbf{P}}^{K} \mathbf{X})  \mathbf{W}, 
        \mathbf{P} = {\tilde{\mathbf{D}}}^{-\frac{1}{2}} \tilde{\mathbf{A}} {\tilde{\mathbf{D}}}^{-\frac{1}{2}}.
        \end{equation}
    \item Graph Convolutional Network (GCN)~\cite{gcn}:
        \begin{equation}
            \mathbf{Z}^{(k+1)} = \sigma({\mathbf{P}}  \mathbf{Z}^{(k)} \mathbf{W}^{(k+1)}), k \in [0, K-1].
        \end{equation}
    \item Graph Isomorphism Network (GIN)~\cite{gin}:
        \begin{equation}
          \hspace{-5mm}  \mathbf{Z}^{(k+1)} = \mathrm{MLP}^{(k+1)} ( (1+\varepsilon^{(k+1)}) \mathbf{Z}^{(k)} + \mathbf{A} \mathbf{Z}^{(k)}   ), k \in [0, K-1].
        \end{equation}
\end{itemize}
For GCN and GIN, we use the skip connection to combine node embeddings of all layers: $\mathbf{Z}_p = \sum_{k=1}^{K}{\mathbf{Z}^{(k)}_p}, p=s,t$. 
The pseudocode is shown in Algorithm~\ref{alg: ft_prop_trans}.

\begin{algorithm}\label{alg: ft_prop_trans}
  \caption{Feature Propagation and Transformation (\texttt{FeatProp\&Trans})}  
  \KwIn{$\mathbf{A}, \mathbf{X}$, model parameter $\{ \mathbf{W}^{(1)}, \ldots, \mathbf{W}^{(K)} \}$} 
  \KwOut{Node representation $\mathbf{Z}$}
  
  $\mathbf{Z}^{(0)} \leftarrow \mathbf{X}$\;
  \For{$k = 1$ to $K$}
  {
    $\mathbf{Z}^{(k)} \leftarrow \sigma(g_{prop}(\mathbf{A}, \mathbf{Z}^{(k-1)}) \cdot \mathbf{W}^{(k)})$\;
  }
  \Return $\mathbf{Z} \leftarrow \sum_{k=1}^{K}{\mathbf{Z}^{(k)}}$\;
\end{algorithm}

\noindent \textbf{Gromov-Wasserstein Learning}.
Given the cost matrices as input, we follow the well-adopted approach in~\cite{gwl, slotalign, dhot} by employing the proximal point method~\cite{ppm,gwl} and reducing the learning problem to optimal transport~\cite{sinkhorn,sinkhorn-knopp}. 
Denote by $L(\mathbf{T}, \boldsymbol{\beta}, \mathcal{W})$ the learning objective in Equation~\ref{eqn:GWD-2}, 
where $\boldsymbol{\beta} = (\boldsymbol{\beta}_s, \boldsymbol{\beta}_t)$ is the learnable coefficients to combine multiple terms for the intra-graph cost and $\mathcal{W}$ denotes the parameters in feature transformation.
We update $\mathbf{T}$ and $\boldsymbol{\Theta} = \{\boldsymbol{\beta}, \mathcal{W}\}$ by the proximal point method and gradient descent, respectively.
{\small
\begin{align} \label{eqn:F-update}
\boldsymbol{\Theta}^{(i+1)} &= \arg\min\left\{\nabla_{\boldsymbol{\Theta}} L(\mathbf{T}^{(i)}, \boldsymbol{\beta}^{(i)}, \boldsymbol{\Theta}^{(i)})^{\intercal} \boldsymbol{\Theta} + \frac{1}{2\tau_{\Theta}} \|\boldsymbol{\Theta} - \boldsymbol{\Theta}^{(i)}\|^{2}\right\}, \nonumber \\
\mathbf{T}^{(i+1)}&=\arg\min\left\{\nabla_{\mathbf{T}}L(\mathbf{T}^{(i)}, \boldsymbol{\beta}^{(i)}, \boldsymbol{\Theta}^{(i)})^\intercal \mathbf{T} + \frac{1}{\tau_{T}} \mathrm{KL}(\mathbf{T} \| \mathbf{T}^{(i)})\right\}.
\end{align}
}
Note that $\mathbf{KL}(\cdot||\cdot)$ is the Kullback-Leibler divergence, and $\tau_{\Theta}$ = $({\tau}_\beta, {\tau}_{W})$ is the learning rates for updating $\boldsymbol{\beta}$ and $\mathcal{W}$, respectively, while $\tau_T$ is the regularization coefficient in the Sinkhorn algorithm. 
It can be proved that with the simplified version of feature transformation, e.g., by adopting the lightweight GCN, 
the GW learning procedure theoretically guarantees the convergence result as in~\cite{gwl, slotalign}.  

\begin{theorem} \label{thm:converge}
    Denote by $L(\mathbf{T}, \boldsymbol{\beta}, \mathcal{W})$ the learning objective in Equation~\ref{eqn:GWD-2}. 
    Suppose that $0 < \tau_T < \frac{1}{L^{T}_f}, 0 < \tau_\beta < \frac{1}{L^{\beta}_f}$, and $0 < \tau_W < \frac{1}{L^{W}_f}$, where $L^{T}_f$, $L^{\beta}_f$, and $L^{W}_f$ are the gradient Lipschitz continuous modulus of $L(\mathbf{T}, \boldsymbol{\beta}, \mathcal{W})$ respectively. 
    If the following conditions hold, i.e.,
    \begin{align}
    \mathcal{T} &= \{ \mathbf{T} \geq 0: \mathbf{T} \boldsymbol{1}_{n_2}= \boldsymbol{\mu} ,\mathbf{T}^{\intercal}\boldsymbol{1}_{n_1} =\boldsymbol{\nu} \}, \label{eqn:thm2-1} \\
    \mathcal{B} &= \{ (\boldsymbol{\beta}_s, \boldsymbol{\beta}_t) \geq 0: \sum_{i=1}^3 \beta^{(i)}_p = 1, p=s,t \}, \label{eqn:thm2-2}\\
    \mathcal{W} &= \{  \sum_{i=1} \mathbf{W}_{ij} = 1, \forall j \in [1,d] \}, \label{eqn:thm2-3}
    \end{align}
    then the GW learning process converges to a critical point of $\bar{L}(\mathbf{T}, \boldsymbol{\beta}, \mathcal{W})$, with
    $
        \bar{L}(\mathbf{T}, \boldsymbol{\beta}, \mathcal{W}) = L(\mathbf{T}, \boldsymbol{\beta}, \mathcal{W}) + \mathbb{I}_T(\mathbf{T}) + \mathbb{I}_{B}(\boldsymbol{\beta}) + \mathbb{I}_W(\mathcal{W}),
    $
    where $\mathbb{I}_C(\cdot)$ denotes the indicator function on the set $C$. 
\end{theorem}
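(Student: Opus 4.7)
The plan is to view the updates in Equation~9 as a three-block Gauss--Seidel proximal alternating scheme, and to extend the convergence argument of \texttt{GWL} and \texttt{SLOTAlign} (which handle only $\mathbf{T}$ and $\boldsymbol{\beta}$) to incorporate the new feature-transformation block $\mathcal{W}$. The key observation is that the $\mathbf{T}$-update is a Bregman (KL-divergence) proximal linearization, while the $\boldsymbol{\beta}$- and $\mathcal{W}$-updates are standard Euclidean proximal linearizations. With step sizes strictly below the inverse block-Lipschitz constants, each block-subproblem yields a sufficient decrease, which is the engine of the proof.

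First, I would verify that the feasible sets in Equations~\ref{eqn:thm2-1}--\ref{eqn:thm2-3} are nonempty, convex, closed, and bounded. The transportation polytope $\mathcal{T}$ is compact whenever $\boldsymbol{\mu}, \boldsymbol{\nu}$ are valid probability vectors, $\mathcal{B}$ is a product of simplices, and $\mathcal{W}$ is a product of column-simplices. Compactness ensures that the iterates remain bounded and that the indicator functions $\mathbb{I}_T, \mathbb{I}_B, \mathbb{I}_W$ are proper and lower semicontinuous. Second, I would check that $L(\mathbf{T}, \boldsymbol{\beta}, \mathcal{W})$ is continuously differentiable and that, restricted to the compact feasible region, its block-partial gradients are Lipschitz continuous with moduli $L^T_f, L^\beta_f, L^W_f$. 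This follows because, with the other blocks fixed, $L$ is polynomial in the entries of the active block, so its Hessian is bounded on the compact region; the theorem statement is essentially assuming these constants exist.

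Third, I would apply the descent lemma to each proximal linearization. For the Euclidean blocks $\boldsymbol{\beta}$ and $\mathcal{W}$, the standard quadratic upper bound combined with $\tau_\beta < 1/L^\beta_f$ and $\tau_W < 1/L^W_f$ gives a decrease of the form $c_\beta \|\boldsymbol{\beta}^{(i+1)} - \boldsymbol{\beta}^{(i)}\|^2$ and $c_W \|\mathcal{W}^{(i+1)} - \mathcal{W}^{(i)}\|^2$, respectively. For the $\mathbf{T}$-block, because KL is $1$-strongly convex in the $\ell_1$ norm on the simplex (Pinsker's inequality), the Bregman descent lemma used in \texttt{GWL} yields an analogous decrease in $\|\mathbf{T}^{(i+1)} - \mathbf{T}^{(i)}\|^2$ under $\tau_T < 1/L^T_f$. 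Summing over $i$ gives a sufficient-decrease inequality
\begin{equation}
\bar{L}(\mathbf{T}^{(i+1)}, \boldsymbol{\beta}^{(i+1)}, \mathcal{W}^{(i+1)}) \leq \bar{L}(\mathbf{T}^{(i)}, \boldsymbol{\beta}^{(i)}, \mathcal{W}^{(i)}) - C\,\bigl(\|\Delta \mathbf{T}\|^2 + \|\Delta \boldsymbol{\beta}\|^2 + \|\Delta \mathcal{W}\|^2\bigr),
\end{equation}
with $C > 0$. Since $\bar{L}$ is bounded below on the compact feasible set, the successive iterate differences are square-summable and hence vanish.

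Finally, compactness guarantees existence of an accumulation point $(\mathbf{T}^*, \boldsymbol{\beta}^*, \mathcal{W}^*)$. Passing to the first-order optimality conditions of each proximal subproblem and using continuity of $\nabla L$, the vanishing-step property implies $0 \in \partial \bar{L}(\mathbf{T}^*, \boldsymbol{\beta}^*, \mathcal{W}^*)$, so the limit is a critical point of $\bar{L}$. The main obstacle is the extra block $\mathcal{W}$: all prior convergence proofs for Gromov--Wasserstein learning handle only two blocks, and I must argue that the three-block Gauss--Seidel order preserves the sufficient-decrease chain. This is not conceptually hard, because the PALM-type framework of Bolte--Sabach--Teboulle accommodates any finite number of blocks, but care is needed to ensure the mixed Bregman/Euclidean proximal terms interleave correctly and that $L^W_f$ can indeed be bounded uniformly over the compact region $\mathcal{T} \times \mathcal{B} \times \mathcal{W}$, which reduces to bounding polynomial factors of $\|\mathbf{X}_p\|, \|g(\mathbf{A}_p)\|$, and $\|\mathbf{T}\|$.
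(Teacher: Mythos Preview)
Your proposal is correct and follows essentially the same approach as the paper: both arguments observe that $\mathcal{T}$, $\mathcal{B}$, $\mathcal{W}$ are compact, that $L$ is polynomial (the paper says ``bi-quadratic'') in each block so the block-gradients are Lipschitz on the feasible region, and then extend the two-block PALM-type convergence proof from \texttt{SLOTAlign} (Theorem~5 therein) to the three-block setting with the additional $\mathcal{W}$ variable. Your outline is in fact considerably more detailed than the paper's sketch, which simply notes boundedness, the bi-quadratic structure, and the ReLU/column-normalization trick to keep $\mathcal{W}$ feasible, and then defers entirely to the \texttt{SLOTAlign} proof.
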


\begin{proof}
Note that by our definition, $\mathcal{T}, \mathcal{B}$, and $\mathcal{W}$ are bounded sets, and $L(\mathbf{T}, \boldsymbol{\beta}, \mathcal{W})$ is a bi-quadratic function with respect to $\mathbf{T}$, $\boldsymbol{\beta}$, and $\mathcal{W}$. 
To guarantee that $\mathcal{W}$ satisfies the above constraint in each iteration, it is sufficient to apply an activation function (e.g., ReLU) followed by column-wise normalization. 
The proof then generally follows that of Theorem 5 in~\cite{slotalign}, where we can compute the gradient of $L$ w.r.t. $\mathbf{T}$, $\boldsymbol{\beta}$, and $\mathcal{W}$ to update the parameters with provable convergence.
\end{proof}

\begin{algorithm}\label{GRAFT}
  \caption{Unsupervised Gromov-Wasserstein Learning with Feature Transformation (\texttt{GRAFT})} \label{alg:GRAFT}
  \KwIn{$\mathcal{G}_s$ and $\mathcal{G}_t$, marginal distributions $\boldsymbol{\mu}$ and $\boldsymbol{\nu}$
  }
  \KwOut{The OT-based alignment probability $\mathbf{T}_{GW}$}

  $\mathbf{T}_{GW} \leftarrow \boldsymbol{\mu} \boldsymbol{\nu}^{\intercal}$, $\boldsymbol{\beta}_s, \boldsymbol{\beta}_t \leftarrow (1,1,1)^{\intercal} $\;
  
  \For{$i = 1$ to $I$}
  {
  $\mathbf{Z}_s \leftarrow \texttt{FeatProp\&Trans}(\mathbf{A}_s, \mathbf{X}_s, \mathbf{W}^{(1, \ldots, K)})$\;
  $\mathbf{Z}_t \leftarrow \texttt{FeatProp\&Trans}(\mathbf{A}_t, \mathbf{X}_t, \mathbf{W}^{(1, \ldots, K)})$\;
    \tcp{intra-graph cost (Eq.~\ref{eqn:multi-view})}\
    $\mathbf{C}_p \leftarrow f_{\boldsymbol{\beta}}(\mathbf{A}_p, \mathbf{X}_p, \mathbf{Z}_p), p=s,t$\; 
    \tcp{inter-graph cost (Eq.~\ref{eqn:GWD})}\
    $\mathbf{C}_{gwd} \leftarrow f_{gwd}(\mathbf{C}_s, \mathbf{C}_t, \mathbf{T}_{GW})$\; 
    Minimize $\langle \mathbf{C}_{gwd}, \mathbf{T}_{GW} \rangle$ by updating $\boldsymbol{\Theta} = \{ \boldsymbol{\beta}, \mathbf{W}^{(1, \ldots, K)} \}$\;
    \tcp{the proximal point method}\ 
    Initialize $\mathbf{T}^{(0)} \leftarrow \mathbf{T}_{GW}$ and $\mathbf{a} \leftarrow \boldsymbol{\mu}$\;
  \For{$i' = 0$ to $I_{ot}-1$}{
    Set $\mathbf{G} \leftarrow \exp\left(-\frac{\mathbf{C}_{gwd}}{\tau_{T}} \right) \odot \mathbf{T}^{(i')}$\;
    \tcp{Sinkhorn-Knopp algorithm}
    \For{$j=1$ to $J$}{
        $\mathbf{b} \leftarrow \frac{\boldsymbol{\nu}}{\mathbf{G}^\intercal \mathbf{a}}$\;
        $\mathbf{a} \leftarrow \frac{\boldsymbol{\mu}}{\mathbf{G} \mathbf{b}}$\;
    }
    $\mathbf{T}^{(i'+1)} \leftarrow \text{Diag}(\mathbf{a}) \mathbf{G} \text{Diag}(\mathbf{b})$\;
  }
  $\mathbf{T}_{GW} \leftarrow \mathbf{T}^{(I_{ot})}$\;
  }
\Return $\mathbf{T}_{GW}$;
\end{algorithm}

The pseudocode of the \texttt{GRAFT} module is demonstrated in Algorithm~\ref{alg:GRAFT}. 
We first initialize the OT-based alignment matrix $\mathbf{T}_{GW}$ with the input marginals and set the learnable coefficients (Line 1). 
Then, for each iteration (Line 2), we invoke the feature propagation and transformation process to derive node representations (Lines 3-4), based on which the intra and inter-graph costs are computed (Lines 5-6). 
The GW learning procedure is employed (Lines 7-16), which takes the Sinkhorn algorithm as a subprocedure to iteratively update $\mathbf{T}_{GW}$, an $n_1 \times n_2$-sized alignment probability matrix. 

\subsubsection{The WL-then-Inner-Product (\texttt{WL}) Module} \label{sec:model-WL}

The \texttt{WL} module uses a GNN with non-learnable parameters (referred to as \emph{parameter-free GNNs}) to simulate the Weisfeiler-Lehman (WL) test~\cite{wl-test}.  
As demonstrated in Algorithm~\ref{alg:WL},
we use $K$ random matrices $\widehat{\mathbf{W}}^{(1)}, \ldots, \widehat{\mathbf{W}}^{(K)}$ to resemble the hash functions in WL test (Line 1). Taking node features as input, the algorithm conducts $K$ layers of graph convolution (Lines 2-3), similar to the GNN in Section~\ref{sec:model-GRAFT} but with non-updated parameters. 
We denote by $\mathbf{H}_s$ and $\mathbf{H}_t$ the node embeddings in order to distinguish them from those obtained via a learnable GNN (i.e., $\mathbf{Z}_s$ and $\mathbf{Z}_t$). 
The embedding-based alignment probability $\mathbf{T}_{WL}$ is obtained by multiplying $\mathbf{H}_s$ and $\mathbf{H}_t$, followed by the normalization process (Line 4). 
We first adopt a ReLU activation and then divide each term by the summation of all terms in $\mathbf{T}_{WL}$. 
In this way, each element of the returned $\mathbf{T}_{WL}$ denotes the matching probability.

\begin{algorithm}
  \caption{WL-then-Inner-Product (\texttt{WL})}\label{alg:WL}
  \KwIn{$\mathcal{G}_s(\mathbf{A}_s, \mathbf{X}_s)$ and $\mathcal{G}_t(\mathbf{A}_t, \mathbf{X}_t)$}
  \KwOut{Embedding-based alignment probability $\mathbf{T}_{WL}$}
  
  Initialize non-learnable parameters $\widehat{\mathbf{W}}^{(1)}, \ldots, \widehat{\mathbf{W}}^{(K)}$\;  
  $\mathbf{H}_s \leftarrow \texttt{FeatProp\&Trans}(\mathbf{A}_s, \mathbf{X}_s, \widehat{\mathbf{W}}^{(1, \ldots, K)})$ \;
  $\mathbf{H}_t \leftarrow \texttt{FeatProp\&Trans}(\mathbf{A}_t, \mathbf{X}_t, \widehat{\mathbf{W}}^{(1, \ldots, K)})$\;
  
  $\mathbf{T}_{WL} \leftarrow \mathrm{Norm}(\mathbf{H}_s\mathbf{H}_t^\intercal$)\;
\Return $\mathbf{T}_{WL}$; 
\end{algorithm}
 
Although the prediction of \texttt{WL} is not as precise as \texttt{GRAFT} due to its simplicity, as we will show in the following, it serves as the prior knowledge for the OT-based component to boost matching accuracy.

\begin{figure}[t]
 \centering
    \begin{tabular}{cc}
        \hspace{0mm} \includegraphics[height=30mm]{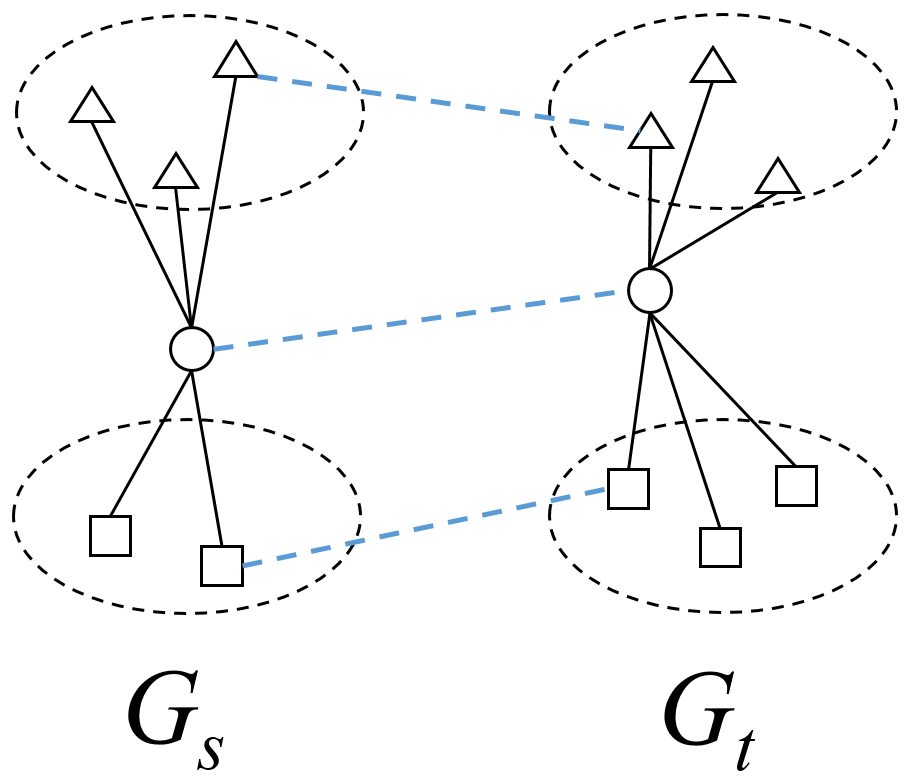} &
        \hspace{0mm} \includegraphics[height=27mm]{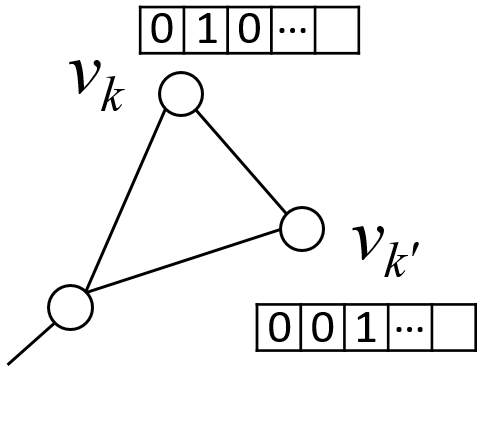} \\

       \hspace{0mm} (a) {An illustrative example} &
       \hspace{0mm} (b) {An inseparable case} \\
 \end{tabular}
\caption{Illustration of the necessity of non-uniform marginals.} \label{fig:NUM}
\end{figure}

\subsubsection{The Non-Uniform Marginal (\texttt{NUM}) Module} \label{sec:model-NUM}

It improves the \texttt{GRAFT} module by providing non-uniform marginals as prior knowledge. 
We observe that OT-based approaches take two marginal distributions on $\mathcal{V}_s$ and $\mathcal{V}_t$ as input, which is commonly assumed to be uniformly distributed~\cite{gwl, slotalign, dhot}. 
However, graphs are typical \emph{non-Euclidean} data where node distributions are not i.i.d. 
A motivating example is shown in Figure~\ref{fig:NUM}(a), where both $\mathcal{G}_s$ and $\mathcal{G}_t$ contain two clusters and a hub node, respectively. 
Suppose that the correspondence between nodes is consistent with their shapes.
 If the pair of hubs is correctly aligned, it might eliminate a large amount of impossible alignments. 
 We formally show that non-uniform marginals indeed enhance the expressive power of GW learning for graph alignment with the following theorem.

\begin{theorem} \label{thm:NUM}
Consider the case that $(u_i, v_k) \in \mathcal{M}^*$ and $(u_i, v_{k'}) \notin \mathcal{M}^*$. 
For GW learning (e.g., \texttt{GRAFT}), under the mild assumption of the intra-graph cost, i.e.,
\begin{align}
&\mathbf{C}_s(i, i) = a, \forall u_i, \mathbf{C}_t(k, k) = b, \forall v_k, \\
&\mathbf{C}_s(i, j) = \mathbf{C}_s(j, i), \mathbf{C}_t(k, l) = \mathbf{C}_t(l, k), \forall u_i, u_j, v_k, v_l, \\
&\mathbf{C}_t(k, l) = \mathbf{C}_t(k', l), \forall v_l \in \mathcal{V}_t \backslash \{v_k, v_{k'}\},
\end{align}
with uniform marginals $\boldsymbol{\mu} = (1/n_1, \ldots, 1/n_1)^\intercal$ and $\boldsymbol{\nu} = (1/n_2, \ldots, 1/n_2)^\intercal$, the first iteration of the GW learning process with $\mathbf{T}^{(0)} = \boldsymbol{\mu} \boldsymbol{\nu}^\intercal$ cannot determine whether $u_i$ is matched to $v_k$ or $v_{k'}$.
\end{theorem}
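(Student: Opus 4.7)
The plan is to show that under the stated symmetry assumptions, the inter-graph cost $\mathbf{C}_{gwd}^{(0)}$ obtained from $\mathbf{T}^{(0)} = \boldsymbol{\mu}\boldsymbol{\nu}^\intercal$ has identical columns at positions $k$ and $k'$, so the first proximal update can only produce $\mathbf{T}^{(1)}(i,k) = \mathbf{T}^{(1)}(i,k')$ for every row $i$. Since both candidates $v_k$ and $v_{k'}$ receive exactly the same alignment mass, no decision between them can be made. I would state this equality of columns as the main technical claim, and then read off the consequence for the Sinkhorn/proximal step.

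To establish the claim, I would start from the definition in Equation~\ref{eqn:GWD-ik}: with $\mathbf{T}^{(0)}(j,l) = 1/(n_1 n_2)$, one has
\begin{equation*}
\mathbf{C}_{gwd}^{(0)}(i,k) - \mathbf{C}_{gwd}^{(0)}(i,k') = \tfrac{1}{n_1 n_2}\sum_{j,l}\Bigl(|\mathbf{C}_s(i,j)-\mathbf{C}_t(k,l)|^2 - |\mathbf{C}_s(i,j)-\mathbf{C}_t(k',l)|^2\Bigr).
\end{equation*}
Expanding the squares, the difference becomes a sum of a linear-in-$\mathbf{C}_s(i,j)$ term proportional to $\mathbf{C}_t(k,l)-\mathbf{C}_t(k',l)$ plus a quadratic correction $\mathbf{C}_t(k,l)^2-\mathbf{C}_t(k',l)^2$. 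For every $l\notin\{k,k'\}$, the third hypothesis gives $\mathbf{C}_t(k,l)=\mathbf{C}_t(k',l)$, so both contributions vanish. The only surviving terms come from $l\in\{k,k'\}$, and here the constant diagonal $\mathbf{C}_t(k,k)=\mathbf{C}_t(k',k')=b$ together with the symmetry $\mathbf{C}_t(k,k')=\mathbf{C}_t(k',k)$ makes the two values $\mathbf{C}_t(k,k),\mathbf{C}_t(k',k)$ a permutation of $\mathbf{C}_t(k,k'),\mathbf{C}_t(k',k')$. Summing over $l\in\{k,k'\}$ this permutation kills both the linear and the quadratic contributions, giving $\mathbf{C}_{gwd}^{(0)}(i,k)=\mathbf{C}_{gwd}^{(0)}(i,k')$. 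The very same argument with $i$ replaced by an arbitrary $i'$ shows that the entire $k$-th and $k'$-th columns of $\mathbf{C}_{gwd}^{(0)}$ coincide.

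With identical columns in the cost and identical marginal entries $\boldsymbol{\nu}_k=\boldsymbol{\nu}_{k'}$, the first proximal-point update in Equation~\ref{eqn:F-update} is symmetric under swapping the two columns of $\mathbf{T}$. Since $\mathbf{T}^{(0)} = \boldsymbol{\mu}\boldsymbol{\nu}^\intercal$ is itself invariant under this swap, and the resulting convex subproblem admits a unique minimizer (the KL regularizer is strictly convex on the transport polytope with fixed marginals), that minimizer $\mathbf{T}^{(1)}$ must also be invariant, i.e.\ $\mathbf{T}^{(1)}(i,k)=\mathbf{T}^{(1)}(i,k')$ for every $i$. In particular $\mathbf{T}^{(1)}(i,k)=\mathbf{T}^{(1)}(i,k')$, so neither the ground-truth pair $(u_i,v_k)$ nor the incorrect $(u_i,v_{k'})$ is preferred by the first GW iteration.

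The mostly mechanical part is the cost-expansion and column-identity computation; the main subtlety I anticipate is justifying the symmetry preservation of the proximal update cleanly. I would handle this by appealing to uniqueness of the entropy/KL-regularized optimal transport solution, which allows one to conclude the swap-invariance of $\mathbf{T}^{(1)}$ directly from that of the input data $(\mathbf{C}_{gwd}^{(0)}, \boldsymbol{\mu}, \boldsymbol{\nu}, \mathbf{T}^{(0)})$ without having to inspect the Sinkhorn iterates themselves. This cleanly motivates why non-uniform marginals, as provided by the \texttt{NUM} module, are necessary to break the degeneracy.
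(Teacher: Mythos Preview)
Your proposal is correct and follows essentially the same route as the paper: both arguments reduce to showing $\mathbf{C}_{gwd}^{(0)}(i,k)=\mathbf{C}_{gwd}^{(0)}(i,k')$ by splitting the sum over $l$ into the cases $l\notin\{k,k'\}$ (killed by the third assumption) and $l\in\{k,k'\}$ (where the diagonal constant and symmetry make the two contributions swap). Your algebraic expansion of the squared differences is just a different bookkeeping of the same cancellation the paper does by direct term matching.

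Two small differences are worth noting. First, the paper spends its opening paragraph arguing that the three cost assumptions are actually \emph{realizable}, by exhibiting automorphic nodes $v_k,v_{k'}$ with one-hot features; you skip this, which is fine for the theorem as literally stated but loses the ``this situation really occurs'' motivation. Second, you go further than the paper in justifying why equal cost columns force $\mathbf{T}^{(1)}(i,k)=\mathbf{T}^{(1)}(i,k')$, invoking strict convexity of the KL-regularized subproblem to deduce swap-invariance of the unique minimizer; the paper simply stops at the cost equality and asserts indistinguishability. Your extra step is a genuine improvement in rigor.
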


\begin{proof}
 First, we show that there exists such a case satisfying our assumption. 
 If the embeddings in both $\mathbf{X}_p$ and $\mathbf{Z}_p$ are normalized, we have $\mathbf{X}_p(v)^\intercal \mathbf{X}_p(v) = \mathbf{Z}_p(v)^\intercal \mathbf{Z}_p(v) = 1$, thus the first constraint holds. 
 The second constraint is satisfied for undirected graphs if each term of $\mathbf{C}_p$ is computed by the dot product of node embeddings. 
 For the third constraint, an illustrative example is given in Figure~\ref{fig:NUM}(b).
 For two nodes $v_k$ and $v_{k’}$ in $\mathcal{G}_t$ that are automorphism to each other but with different one-hot encodings, after feature propagation, we have $\mathbf{R}_t(v_l, k) = \mathbf{R}_t(v_l, k’)$ for each $v_l$ because the propagation strategy is shared by each dimension.
 Therefore, we have $\mathbf{R}_t(v_l)^\intercal \mathbf{R}_t(v_k) = \mathbf{R}_t(v_l)^\intercal \mathbf{R}_t(v_{k’})$, and the constraint follows. 
 This claim also holds with feature transformation if the GNN module cannot separate $\mathbf{Z}_t(v_l)^\intercal \mathbf{Z}_t(v_k)$ and $\mathbf{Z}_t(v_l)^\intercal \mathbf{Z}_t(v_{k’})$.

 Second, we categorize the terms in $\mathbf{C}_{gwd}(i, k)$ as the following cases (Cf. Equation~\ref{eqn:GWD-ik}). Note that we have $\mathbf{T}^{(0)}(i, k) = \frac{1}{n_1n_2}, \forall i,k$. Denote by $c_{ik}(j, l)$ the terms in $\mathbf{C}_{gwd}(i, k)$, we have
 \begin{itemize} 
 \item $l=k$: $c_{ik}(j, k) = \left | \mathbf{C}_s(i, j) - \mathbf{C}_t(k, k) \right |^2 \mathbf{T}(j, k)$, 

 \item $l=k’$: $c_{ik}(j, k') = \left | \mathbf{C}_s(i, j) - \mathbf{C}_t(k, k’) \right |^2 \mathbf{T}(j, k’)$,

 \item $l \neq k, k’$: $c_{ik}(j, l) = \left | \mathbf{C}_s(i, j) - \mathbf{C}_t(k, l) \right |^2 \mathbf{T}(j, l)$.
 \end{itemize}
 For each $j$, by constraint 1 and 2, the summation of the first and second cases are equal for $\mathbf{C}_{gwd}(i, k)$ and $\mathbf{C}_{gwd}(i, k’)$. 
 By constraint 3, for each term in the third case, the values are the same for $v_k$ and $v_{k’}$. 
 Then we have $\mathbf{C}_{gwd}(i, k) = \mathbf{C}_{gwd}(i, k')$ in the first iteration of GW learning for each $u_i \in \mathcal{G}_s$, which completes the proof. 
 \end{proof}

Note that apart from feature transformation, which improves model's discriminative power via the first part of each term of $\mathbf{C}_{gwd}(i, k)$, i.e., $\left | \mathbf{C}_s(i, j) - \mathbf{C}_t(k, l) \right |^2$, we offer another perspective by investigating the second term $\mathbf{T}(j, l)$. 
To solve this problem, the following heuristics are provided to set non-uniform marginals without extra computational costs. 

\noindent \textbf{A fixed prior inspired by WL}. 
We use the prior knowledge obtained from the WL test to set non-uniform marginals $\boldsymbol{\mu}$ and $\boldsymbol{\nu}$. 
Given the embedding-based alignment probability $\mathbf{T}_{WL}$, the marginals are computed by row/column-wise summation (shown in Algorithm~\ref{alg:NUM}).

\begin{algorithm}
\label{NUM}
  \caption{Non-Uniform Marginal (\texttt{NUM})}\label{alg:NUM}
  \KwIn{Embedding-based alignment probability $\mathbf{T}_{WL}$}
  \KwOut{Two marginal distributions $\boldsymbol{\mu}, \boldsymbol{\nu}$}
 
  $\boldsymbol{\mu} \leftarrow \mathbf{T}_{WL} \cdot \boldsymbol{1}_{n_2}, \boldsymbol{\nu} \leftarrow \mathbf{T}_{WL}^{\intercal} \cdot \boldsymbol{1}_{n_1}$\;
\Return $\boldsymbol{\mu}, \boldsymbol{\nu}$; 
\end{algorithm}

\noindent \textbf{Adaptive marginals during GW Learning}. 
Recall that $\mathbf{T}_{WL}$ is obtained by non-learnable embeddings $\mathbf{H}_s$ and $\mathbf{H}_t$, while the \texttt{GRAFT} module yields more informative node representation $\mathbf{Z}_s$ and $\mathbf{Z}_t$ through GW learning. 
Therefore, during each iteration (i.e., Line 2) of Algorithm~\ref{alg:GRAFT}, we use the learnable representations to compute an $n_1 \times n_2$-sized matrix following Algorithm~\ref{alg:WL}, and take it as the input of Algorithm~\ref{alg:NUM}. 
The marginals are used to initialize the proximal point method (i.e., Lines 8, 12~\&~13 of Algorithm~\ref{alg:GRAFT}) and are adjusted on the fly.

\subsection{Ensuring Matching Properties (The \texttt{Combine} Module)} \label{sec:model-EL}

\noindent \textbf{Motivation}. 
As we have pointed out in Figure~\ref{fig:intro}(b)~\&~(c) of Section~\ref{sec:introduction}, pure learning-based methods~\cite{gwl,slotalign,dhot,gtcalign,hermanns2023grasp} fail to guarantee alignment properties such as one-to-one matching and mutual alignment, albeit that they can achieve better practical accuracy than the traditional approaches~\cite{feizi2019spectral,karakasis2021joint,peng2010new}. 
In contrast, existing studies (e.g.,~\cite{kolavr2012graphalignment}) have demonstrated real-world scenarios in which one-to-one node matching needs to be ensured. 
The inconsistency in mutual alignment means that for a predicted matching $(u, v)$ taking from the row/column-wise maximum of the alignment probabilities, we have $\argmax_y \mathbf{T}(u, y) = v$ and $\argmax_x \mathbf{T}(x, v) \neq u$. 
This contradicts the essence of graph alignment where aligned nodes are paired up.
We demonstrate in Figure~\ref{fig:missingrate} that both problems are prevalent on real-world datasets. 
Given the output of an algorithm, we calculate the percentages of one-to-many predictions and inconsistencies in terms of mutual alignment, and find that a significant ratio of the predictions violates these alignment properties.

\begin{figure}
    \centering
    {\includegraphics[height=5mm]{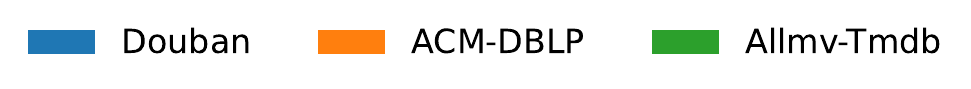}}
    {\includegraphics[height=5mm]{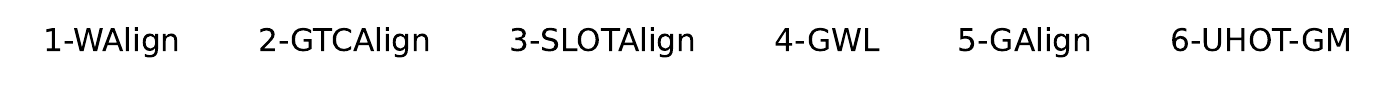}}
    \begin{tabular}{cc} 
        \hspace{-4mm}\includegraphics[height=35mm]{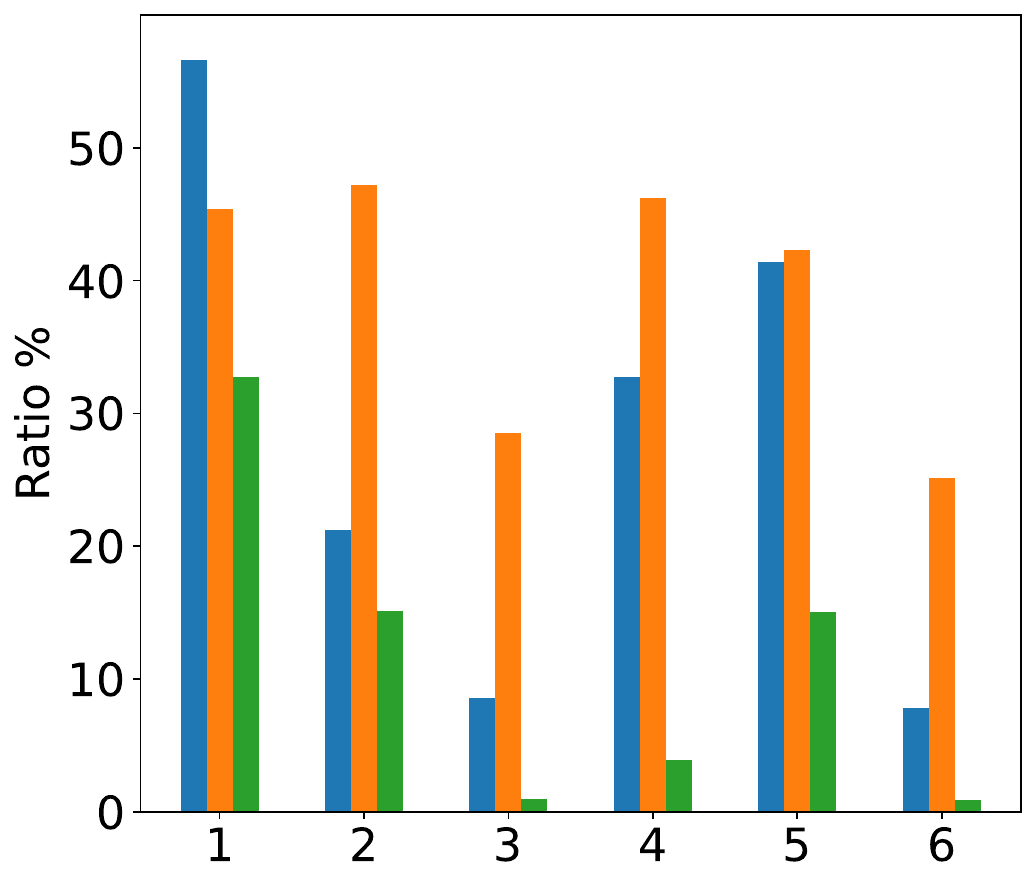} & \hspace{-4mm}\includegraphics[height=35mm]{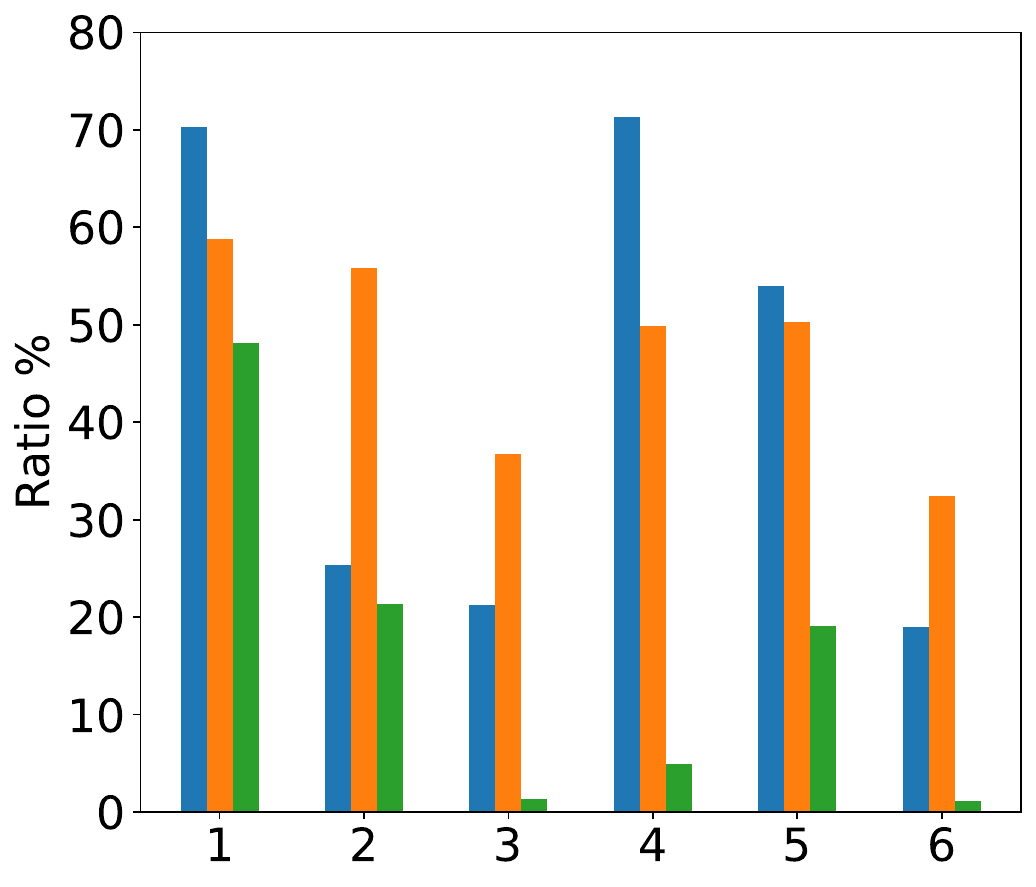}\\

        (a) One-to-many prediction &
        (b) Inconsistent alignment
    \end{tabular}
    \caption{ Percentage of one-to-many predictions and inconsistencies in terms of mutual alignment for representative baselines.
    }
    \label{fig:missingrate}
\end{figure}

We observe that \emph{maximum weight matching} is a feasible solution to this problem, which has been adopted by tradtional approaches. 
It takes a bipartite graph with weighted edges as input and finds a set of edges with maximum total weight on the condition that selected edges do not share any endpoint (i.e., node). 
By constructing a bipartite graph from the node sets, finding the maximum weight matching is equivalent to graph alignment with the additional benefit of holding the matching properties. 
The remaining challenge is to guarantee the accuracy of the alignment.  
Intuitively, the edge weights determine the alignment accuracy and should be close to the ground truth alignment.  
Our solution, referred to as the \texttt{Combine} module, tackles this problem by effectively combining both embedding and OT-based alignment probabilities. 
Specifically, it includes the following steps (Cf. Figure~\ref{fig:framework}).

\noindent \textbf{Bipartite Graph Construction}. 
To construct the bipartite graph $\mathcal{G}_b$, we set $\mathcal{V}_b = \mathcal{V}_s \cup \mathcal{V}_t$.
Instead of building edges for all node pairs, we observe that in most cases, the ground truth is contained in the top-$r$ predictions. 
As the \texttt{GRAFT} module generally achieves better performance, for each node $u_i \in \mathcal{G}_s$, we connect it to the nodes in $\mathcal{G}_t$ with top-$r$ largest $\mathbf{T}_{GW}(i, \cdot)$.
This step reduces the edge number from $n_1n_2$ to $rn_1$ and significantly improves practical efficiency.

\noindent \textbf{Edge Weight Computation}.  
We adopt the idea of \emph{ensemble learning} (more specifically, \emph{stacking}) and see both embedding and OT-based predictions as base learners.
We employ a simple yet effective {parameter-free} strategy to take the best of both worlds and believe that it tends to obtain more accurate predictions, i.e., the edge weight being close to the ground truth alignment. 
We consider the following concrete forms of edge weights and use the ``both confident'' setting by default:
\begin{itemize} [leftmargin=0.35cm, itemindent = 0cm]
\item ``Both confident’’: $ g_{EL}(\mathbf{T}_{WL}, \mathbf{T}_{GW}) = \mathbf{T}_{WL} \odot \mathbf{T}_{GW}$,

\item ``Simple average’’: $ g_{EL}(\mathbf{T}_{WL}, \mathbf{T}_{GW}) = (\mathbf{T}_{WL} + \mathbf{T}_{GW})/2$.

\end{itemize}

\noindent \textbf{Maximum Weight Matching}.
Note that any maximum weight matching algorithm with real-value weights (e.g., the modified Jonker-Volgenant algorithm (\texttt{MJV})~\cite{crouse2016implementing}) can be invoked to solve the problem. 
The pseudocode of the \texttt{Combine} module is shown in Algorithm~\ref{alg:EL}, which returns a set of (one-to-one) matched node pairs instead of the alignment probabilities.

\begin{theorem}
With the \texttt{Combine} module, our algorithm predicts a set of matched node pairs with desired properties including one-to-one matching and mutual alignment. 
\end{theorem}
\begin{proof}
The conclusion is easily obtained according to the property of the maximum weight matching algorithms.
\end{proof}

\begin{algorithm}
  \caption{The \texttt{Combine} Module}\label{alg:EL} 
  \KwIn{Embedding~\&~OT-based outputs, $\mathbf{T}_{WL}$, $\mathbf{T}_{GW}$}
  \KwOut{The combined prediction $\mathcal{M}$}
  Construct bipartite graph $\mathcal{G}_b$ with $\mathcal{V}_b \leftarrow \mathcal{V}_s \cup \mathcal{V}_t$\;
  Construct $\mathcal{E}_b$ by row-wise top-$r$ prediction of $\mathbf{T}_{GW}$\;
  $\mathcal{E}_b.weight \leftarrow g_{EL}(\mathbf{T}_{WL}, \mathbf{T}_{GW})$\;
  Compute the maximum weight matching $\mathcal{M}$ for $\mathcal{G}_b$\;
\Return $\mathcal{M}$; 
\end{algorithm}

\subsection{Algorithm Analysis and Optimization}

\subsubsection{Complexity Analysis}
We discuss the complexity of \texttt{CombAlign} by analyzing each module in brief. 
The \texttt{WL} module takes $O(K(md + nd^2) + n^2d)$ time, where the first term follows classical message-passing GNNs and the second term correpsonds to the multiplication of two $n \times d$-sized embedding matrices. 
The \texttt{NUM} module only incurs $O(n^2)$ time for the matrix-vector multiplication. 
The \texttt{GRAFT} module has the same asymptotic complexity with \texttt{SLOTAlign}~\cite{slotalign} since the additional feature transformation only needs $O(Kd^2)$ time. 
To be more specific, for both the \texttt{GRAFT} module and \texttt{SLOTAlign}, the cost computation of GW learning is $O(n^3)$ per round according to~\cite{gwl}, with an inner-loop of $I_{ot}$ rounds for the Proximal Point Method. 
For the \texttt{Combine} module, the bipartite graph construction step takes linear time, and the complexity is dominated by maximum weight matching. 
Although classical algorithms~\cite{km, munkres1957algorithms, crouse2016implementing} typically need $O(n^3)$ time, note that our bipartite graph only contains $O(rn_1)$ edges, which corresponds to the \emph{sparse} linear assignment problem with $O(n^2r)$ complexity~\cite{cui2016solving}. 
This process can be further improved with faster matching algorithms~\cite{schwartz1994computational,buvs2009towards}.
In sum, the asymptotic complexity is $O(I(K(md+nd^2) + n^2d + I_{ot}n^3))$, which is in the same order as other optimal transport-based alignment methods~\cite{gwl, fusegwd,dhot,slotalign}. 
In comparison, we also demonstrate the time complexity of two representative methods in Table~\ref{tab:complexity}.
Note that both embedding and OT-based methods need to iteratively update the alignment probabilities, thus we use $I$ to represent the number of iterations.

\begin{table}
\centering
\caption{ Comparison of model complexity with state of the art.}
\begin{tabular}{|c|c|}
\hline
 Method & Time Complexity  \\
 \hline

\texttt{GTCAlign}~\cite{gtcalign} (Emb.)  &  $O(IK(md + n^2d))$ \\
\hline
\texttt{SLOTAlign}~\cite{slotalign} (OT)  & $O( I(K(md + nd^2) + n^2d + I_{ot}n^3) )$ \\
\hline
\texttt{CombAlign}  &  $O( I(K(md + nd^2) + n^2d + I_{ot}n^3) )$ \\
\hline
\texttt{CombAlign} (Optimized)  &  $O( I(K(md + nd^2) + n^2(d + I_{ot}\mathrm{log}n))) )$ \\
\hline
\end{tabular}
\label{tab:complexity}
\end{table}

\subsubsection{Optimizations}
According to our analysis, the asymptotic complexity of \texttt{CombAlign} (and other OT-based approaches) can be simplified to $O(n^3)$ since $n$ dominates other inputs (i.e., $I, K, $ and $d$). 
Unfortunately, a time complexity gap is observed between embedding-based solutions~\cite{walign, galign, gtcalign} and \texttt{CombAlign}, while the former only takes roughly $O(n^2)$ time. 
The complexity bottleneck of OT arises from the GWD cost computation (i.e., $\mathbf{C}_{gwd}$). 
According to~\cite{gwl}, it can be reformalized as $\mathbf{C}_{gwd} = \mathbf{C}_s^2 \boldsymbol{\mu} \mathbf{1}^\intercal_{n_2} + \mathbf{1}_{n_1} \boldsymbol{\nu}^\intercal \mathbf{C}_t^2 - 2 \mathbf{C}_s  \mathbf{T} \mathbf{C}_t^\intercal$, reducing the computational complexity from $O(n^4)$ to $O(n^3)$. (Note that the square operation is element-wise.) 
Since only the third term has a complexity of $O(n^3)$, we simplify the intra-graph cost design by only utilizing the adjacency matrix, which leads to a $\mathbf{A}_s  \mathbf{T} \mathbf{A}_t^\intercal$ term. 
For real-world graphs, the adjacency matrix is sparse, namely, the number of edges $m$ is bounded by $O(n\mathrm{log}n)$~\cite{wang2017fora}, and it holds that $m/n \ll d$ in practice (Cf. Section~\ref{sec:experiments}). 
To this end, multiplying two sparse matrices with a dense one can be fulfilled in $O(nm) = O(n\mathrm{log}n)$ time. 
We also employ powerful embedding learning models (e.g.,~\cite{wu2023sgformer}) in the \texttt{GRAFT} module to alleviate the degradation of model expressiveness. 
As shown in Table~\ref{tab:complexity}, our optimized model successfully closes the complexity gap between two categories of learning-based approaches. 

\section{Experiments}\label{sec:experiments}
We conduct a comprehensive experimental evaluation to answer the following key research questions:
\begin{itemize}
    \item Q1. Does \texttt{CombAlign} outperform state of the art? 
    \item Q2. How effective is each component of \texttt{CombAlign}?
    \item Q3. Is \texttt{CombAlign} capable of holding matching properties such as one-to-one matching and mutual alignment?    
    \item Q4. How efficient and scalable is \texttt{CombAlign}?
    
\end{itemize}

\subsection{Experimental Settings}
\noindent \textbf{Datasets}. 
Our experimental evaluation uses six well-adopted datasets for unsupervised graph alignment~\cite{slotalign, dhot, gtcalign, gwl, walign}, including three real-world datasets Douban Online-Offline~\cite{zhang2016final}, ACM-DBLP~\cite{zhang2018attributed}, Allmovie-Imdb~\cite{galign}, as well as three synthetic ones, i.e., Cora~\cite{sen2008collective}, Citeseer~\cite{sen2008collective}, and PPI~\cite{zitnik2017predicting}. 
Moreover, we include two larger datasets for scalability analysis. The statistics of the datasets are listed in Table~\ref{table:dataset-statistics}. 
It is noteworthy that all ground truths provided in Table~\ref{table:dataset-statistics} are one-to-one node correspondences.

\begin{table} 
  \caption{Datasets and their statistics.}
  \label{table:dataset-statistics}
  \small
  \begin{tabular}{c|ccp{0.8cm}p{0.9cm}}
    \toprule
    Dataset & $n_1, n_2$ & $m_1, m_2$ & Features & Anchors \\
    \midrule
    \multirow{2}{*}{Douban~\cite{zhang2016final}} & $1,118$ & $3,022$ & \multirow{2}{*}{$538$} & \multirow{2}{*}{$1,118$} \\
                                           & $3,906$ & $16,328$  \\
                                           \midrule
    \multirow{2}{*}{ACM-DBLP~\cite{zhang2018attributed}} & $9,872$ & $39,561$ & \multirow{2}{*}{$17$} & \multirow{2}{*}{$6,325$} \\
                                           & $9,916$ & $44,808$  \\
                                           \midrule
    \multirow{2}{*}{Allmovie-Imdb~\cite{galign}} & $6,011$ & $124,709$ & \multirow{2}{*}{$14$} & \multirow{2}{*}{$5,174$} \\
                                           & $5,713$ & $119,073$  \\
                                           \midrule
    
   \multirow{2}{*}{Cora~\cite{sen2008collective}} & $2,708$ & $5,028$ & \multirow{2}{*}{$1,433$} & \multirow{2}{*}{$2,708$} \\
                                       & $2,708$ & $5,028$  \\
                                       \midrule        
   \multirow{2}{*}{Citeseer~\cite{sen2008collective}} & $3,327$ & $4,732$ & \multirow{2}{*}{$3,703$} & \multirow{2}{*}{$3,327$} \\
                                       & $3,327$ & $4,732$  \\
                                          \midrule
   \multirow{2}{*}{PPI~\cite{zitnik2017predicting}} & $1,767$ & $16,159$ & \multirow{2}{*}{$50$} & \multirow{2}{*}{$1,767$} \\
       & $1,767$ & $16,159$  \\
        \midrule
    \multirow{2}{*}{CS~\cite{shchur2018pitfalls}} & $18,333$ & $114,652$ & \multirow{2}{*}{$6,805$} & \multirow{2}{*}{$18,333$} \\
       & $18,333$ & $147,410$  \\
        \midrule
       \multirow{2}{*}{Physics~\cite{shchur2018pitfalls}} & $34,493$ & $347,147$ & \multirow{2}{*}{$8,415$} & \multirow{2}{*}{$34,493$} \\
       & $34,493$ & $ 446,332$  \\
    \bottomrule
\end{tabular}
\end{table}

\noindent \textbf{Baselines}. 
We compare \texttt{CombAlign} with traditional solutions and representative embedding and OT-based models. 
\begin{itemize}
    \item \emph{Traditional approaches}. We include \texttt{KNN} and use parameter-free GNNs to obtain node embeddings, based on which the top-$k$ similar node pairs across graphs are matched. 
    We also adopt the modified version of the Jonker-Volgenant algorithm~\cite{crouse2016implementing} (\texttt{MJV}) for the linear assignment problem, which returns one-to-one alignment.
    \item \emph{Embedding-based models}. We consider \texttt{WAlign}~\cite{walign}, which relies on GNNs with a Wasserstein distance discriminator to find node correspondences and two state of the arts with graph augmentation techniques~\cite{galign, gtcalign}.  
    \item \emph{OT-based models}. This includes \texttt{GWL}~\cite{gwl}, \texttt{SLOTAlign}~\cite{slotalign}, and \texttt{UHOT-GM}~\cite{dhot}, all based on the GW learning process but with increasingly sophisticated design of intra-graph cost, from hand-designed functions and learnable node embeddings~\cite{gwl} to node feature propagation and multiple cost terms~\cite{slotalign, dhot}.

\end{itemize}

We omit other learning-based methods (e.g.,~\cite{fusegwd,hermanns2023grasp}) as they have been surpassed by the above baselines according to existing literature~\cite{walign, gtcalign, gwl, slotalign}.

\noindent \textbf{Evaluation Metrics}. 
Following previous studies~\cite{slotalign,dhot,gtcalign,walign}, we adopt Hits@$k$ with $k = \{1, 5, 10\}$ and Mean Average Precision (MAP) to evaluate the model performance. 
Given the ground truth $\mathcal{M}^*$, for each $(u, v) \in \mathcal{M}^*$ with $u \in \mathcal{G}_s$ and $v \in \mathcal{G}_t$, we check if $v$ belongs to the top-$k$ predictions ordered by alignment probability, denoted as $S_k(u)$. 
Then, Hits@$k$ is computed as follows:
\begin{equation} \label{eqn:Hits-k}
\text{Hits@}k = \frac {\sum_{(u, v) \in \mathcal{M}^*} {\mathds{1}[v \in S_k(u)]} }{ |\mathcal{M}^*| },
\end{equation}
where $\mathds{1}[\cdot]$ is the indicator function which equals 1 if the condition holds. 
MAP is employed to evaluate the accuracy of the predicted node rankings:
\begin{equation}
    \text{MAP} = \frac{\sum_{(u_i, v_k) \in {\mathcal{M}^*}}
    \frac{1}{{Rank}(u_i, v_k)}}{\left|{\mathcal{M}^*}\right|},
\end{equation}
where $Rank(u_i, v_k)$ denotes the ranking of $v_k$ according to $\mathbf{T}(i, \cdot)$ by descending order.

\noindent \textbf{Implementation Details}. 
For all baselines, we obtain the source code from the authors. Our model is implemented based on PyTorch 1.12.1 and PyTorch Geometric. Specifically, for \texttt{CombAlign}, we set the feature dimension $d$ as 32 and the graph convolution layers $K$ as 3.  
The learning rates $\tau_{\beta}$ and $\tau_{W}$ for updating $\beta$ and $\mathcal{W}$ are set to 1 and 0.01, respectively. 
All experiments are conducted on a high-performance computing server with a GeForce RTX 4090 GPU equipped with 24 GB of memory. The source code will be made publicly available upon the acceptance of the paper.
\begin{table*}[t]
    \centering
        \begin{threeparttable}
            \caption{Comparison of model performance on six datasets.}
            \begin{tabular}{cc|ccccccccc}
                \toprule
                Datasets &Metrics&\texttt{KNN}& \texttt{GAlign}& \texttt{WAlign} & \texttt{GTCAlign} & \texttt{GWL} & \texttt{SLOTAlign} & \texttt{UHOT-GM} & \texttt{CombAlign w/o C} \\
                \midrule
                \multirow{4}{*}{Douban } & Hits@1&27.82 & 45.26 & 39.45&\underline{61.79} & 3.29  & {51.43} & {60.23}   &  \textbf{68.52}\\
                &Hits@5&45.53& 67.71 & 62.35& \underline{76.83} & 8.32  & {73.43} & 71.36  &  \textbf{87.84} \\
                &Hits@10&52.68 & 78.00 & 71.47& \underline{82.29} & 9.93  & {77.73} & 76.91  &  \textbf{91.41}\\
                & MAP&36.08 &56.32 &46.22 &\underline{69.77} &5.79 & 61.29& 67.35  &\textbf{77.08} \\
                \midrule
                \multirow{4}{*}{ACM-DBLP} & Hits@1&36.35 & \underline{70.20}&63.43& 60.92 & 56.36   & {66.04} & 69.89  &  \textbf{72.18} \\
                &Hits@5&66.83 &\underline{87.23}& 83.18& 75.60 & 77.09  & {85.84} & 87.12  &  \textbf{88.98} \\
                &Hits@10&76.22&\underline{91.36} & 86.58& 79.97 & 82.18   & {87.76} & 90.65 &  \textbf{92.63} \\
                & MAP&50.11 &\underline{77.49}& 70.76&67.67&64.82  &73.76 &77.18   &\textbf{79.55} \\
                \midrule
                \multirow{4}{*}{Allmovie-Imdb} & Hits@1&32.39 & {82.14}& 52.61  & 84.73&87.82  & {90.60} & \underline{91.73} & \textbf{96.25} \\
                &Hits@5& 51.57&86.35 & 70.91& 89.89 & 92.31  & {92.75} & \underline{94.36}  &  \textbf{97.66} \\
                &Hits@10& 58.79&90.03 & 76.52 & 91.32& 92.83 & {93.14} & \underline{94.96} &  \textbf{97.89} \\
                & MAP&41.50&84.96  &61.17&87.12 &89.64 &{91.61} &\underline{92.74}  &\textbf{97.31} \\
                \midrule
                \multirow{4}{*}{Cora} &Hits@& 95.01& 99.45 & 98.45& 99.35 & 86.19  & \underline{99.48} & \underline{99.48}  & \textbf{99.56} \\
                & Hits@5 &100&\textbf{100} & \textbf{100} & \textbf{100}& 93.61  &\textbf{100} & \textbf{100}  & \textbf{100} \\
                & Hits@10 &100&\textbf{100}& \textbf{100}& \textbf{100} & 94.57  &\textbf {100} & \textbf{100}  & \textbf{100} \\
                & MAP&98.66 &99.69&99.18&99.69 &89.71  &
                {99.71} &\underline{99.72}   &\textbf{99.75} \\
                
                \midrule
                \multirow{4}{*}{Citeseer} & Hits@1 &89.72& \underline{99.73} & 97.81& 99.68  & 57.05  & 99.25 & 99.47&  \textbf{99.82} \\
                & Hits@5&100& \textbf{100} &\textbf{100}& \textbf{100} & 65.04  & \textbf{100} & \textbf{100}  & \textbf{100} \\
                & Hits@10&100& \textbf{100} & \textbf{100} & \textbf{100}& 65.95  & \textbf{100} & \textbf{100}  & \textbf{100} \\
                & MAP&94.91&99.84 &98.88&\underline{99.89} &61.31  &99.62 &99.69  &\textbf{99.91} \\

                \midrule
                \multirow{4}{*}{PPI} & Hits@1&84.96& 89.20 & 88.51& 89.25 & 86.76  & {89.30} & \underline{89.33}   & \textbf{89.70} \\
                  & Hits@5&89.10& 90.64 & \underline{93.10} & 92.81& 88.06  & {92.53} & 92.97  & \textbf{93.15} \\
                & Hits@10&92.17 &{94.16}& \underline{94.17}& 94.07 & 88.62 &   {93.49} & 93.52  & \textbf{94.85} \\
                & MAP&87.65&90.72 & 89.02&\underline{90.80}&87.74  &90.76 &90.81   &\textbf{91.12} \\
                
                \bottomrule
            \end{tabular}

            \label{table:result}
        \end{threeparttable}
\end{table*}

\begin{table}
\centering
\caption{ Comparison of Hits@1 with one-to-one matching constraint.}
\begin{tabular}{|p{20mm}|ccc|}
\hline
 Datasets& Douban & ACM-DBLP &Allmovie-Imdb \\
 \hline
\texttt{KNN} &23.88 &31.11  & 28.79    \\
\hline
\texttt{MJV}  &{31.03} &  {56.89} &  {35.14} \\
\hline
\texttt{GAlign}  & 20.84 &   {66.18} & 80.28 \\
\hline
\texttt{WAlign}  & {15.56} & {60.33}  &{51.29} \\
\hline
\texttt{GTCAlign}  & {57.15} & {56.96} & {83.91} \\
\hline
\texttt{GWL} & {3.22} & {52.09} & {87.22}\\
\hline
\texttt{SLOTAlign}  &{49.19} &  {64.32} &  {90.31} \\
\hline
\texttt{UHOT-GM}  &\underline{58.31} &  \underline{67.78} &  \underline{91.26} \\
\hline
\texttt{CombAlign }  & \textbf{70.75} & \textbf{74.19} & \textbf{96.57} \\
 \hline

\end{tabular}
\label{tab: one to one}
\end{table}

\begin{table}
\centering
\caption{The performance (Hits@1) of \texttt{GRAFT} with different GNNs.}
\begin{tabular}{|p{22mm}|ccc|}
\hline
Hits@1 & Douban & ACM-DBLP &Allmovie-Imdb \\
\hline
 w/ LGCN & {57.07} & {68.87}  &{93.39}  \\
\hline
 w/ GCN & 61.45 &  69.68 & 94.57 \\
\hline
 w/ GIN & {63.15} & {71.64} & {95.61} \\
 \hline
 w/ SGFormer & {65.38} & {72.73} & {96.02} \\
 \hline
\end{tabular}
\label{tab: diff-gnn}
\end{table}

\subsection{Experimental Results}
\subsubsection{Model Performance}  
\noindent \texttt{CombAlign} \textbf{w/o the} \texttt{Combine} \textbf{module} (\texttt{CombAlign w/o C}). 
We first evaluate the prediction accuracy following previous learning-based settings~\cite{slotalign, gtcalign}, specifically, the output is formulated as the alignment probability matrix. 
As shown in Table~\ref{table:result}, we use the bold font to highlight the best result and underline the second-best values\footnote{The reported figures are the average of five independent rounds, and we omit the standard errors following most existing work because the prediction accuracy is highly concentrated for this problem.}. 
As for the baselines, note that \texttt{GAlign} achieves the best performance on ACM-DBLP while \texttt{UHOT-GM} has the highest accuracy on Allmovie-Imdb. 
For the Douban dataset, \texttt{GTCAlign} and \texttt{UHOT-GM} are the two best baselines. 
It can be concluded that neither OT-based methods nor embedding solutions can consistently outperform the other. 
For the embedding-based algorithms, we note that there does not exist a clear winner between \texttt{GAlign} and \texttt{GTCAlign} on the real-world datasets, demonstrating the limitation of the embedding-only heuristics in the unsupervised setting. 
Nonetheless, both of them outperform \texttt{WAlign}, while the training process of the latter encounters significant oscillations, partially due to the challenges in training GANs~\cite{wgan-improved, WGAN}. 
For the OT-based methods, it is clear that more sophisticated design of intra-graph costs by considering multiple terms (i.e., \texttt{SLOTAlign}) and cross-modal comparison (i.e., \texttt{UHOT-GM}) results in better practical accuracy.

For the \texttt{CombAlign w/o C} algorithm, in our experiments, we adopt GCN~\cite{gcn} as the default GNN modules in \texttt{GRAFT} and \texttt{WL}. 
It consistently demonstrates superior performance across all evaluation metrics on three real-world datasets, validating the effectiveness of our model. Notably, on the Douban and Allmovie-Imdb datasets, it achieves significant improvements in Hits@1, with relative gains of 10.89\% and 4.93\%, respectively.  
For the synthetic datasets, \texttt{CombAlign w/o C} also has the highest performance.

\noindent \texttt{CombAlign} \textbf{(w/ the} \texttt{Combine} \textbf{module)}. 
As presented in Table~\ref{tab: one to one}, we evaluate \texttt{CombAlign} against the baselines by forcing the one-to-one matching constraint. 
Among the evaluated methods, only \texttt{MJV} and \texttt{CombAlign} inherently ensure one-to-one matching, as they directly output matched node pairs. 
For other methods generating the alignment probabilities, we eliminate all one-to-many predictions by retaining only the node pair with the highest probability. (Also note that mutual alignment is non-trivial to obtain for these methods.) 
It is observed that these methods inevitably suffer from performance degradation in this setting (Cf. Table~\ref{table:result}), and the phenomenon is significant on Douban and ACM-DBLP. 

In comparison, \texttt{CombAlign} consistently outperforms all other methods. 
Notably, the substantial improvement of \texttt{CombAlign} over \texttt{MJV} underscores the critical role of weight design and demonstrates the effectiveness of the proposed ensemble learning strategy. 
Compared to \texttt{CombAlign w/o C}, \texttt{CombAlign} not only ensures one-to-one matching (and mutual alignment) but also significantly enhances performance.

\begin{figure}
    \centering
    {\includegraphics[height=7.5mm]{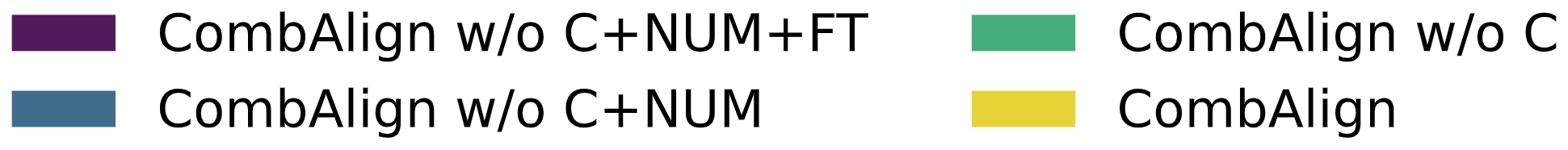}}
    \begin{tabular}{ccc} 
        \hspace{-4mm}\includegraphics[height=40mm]{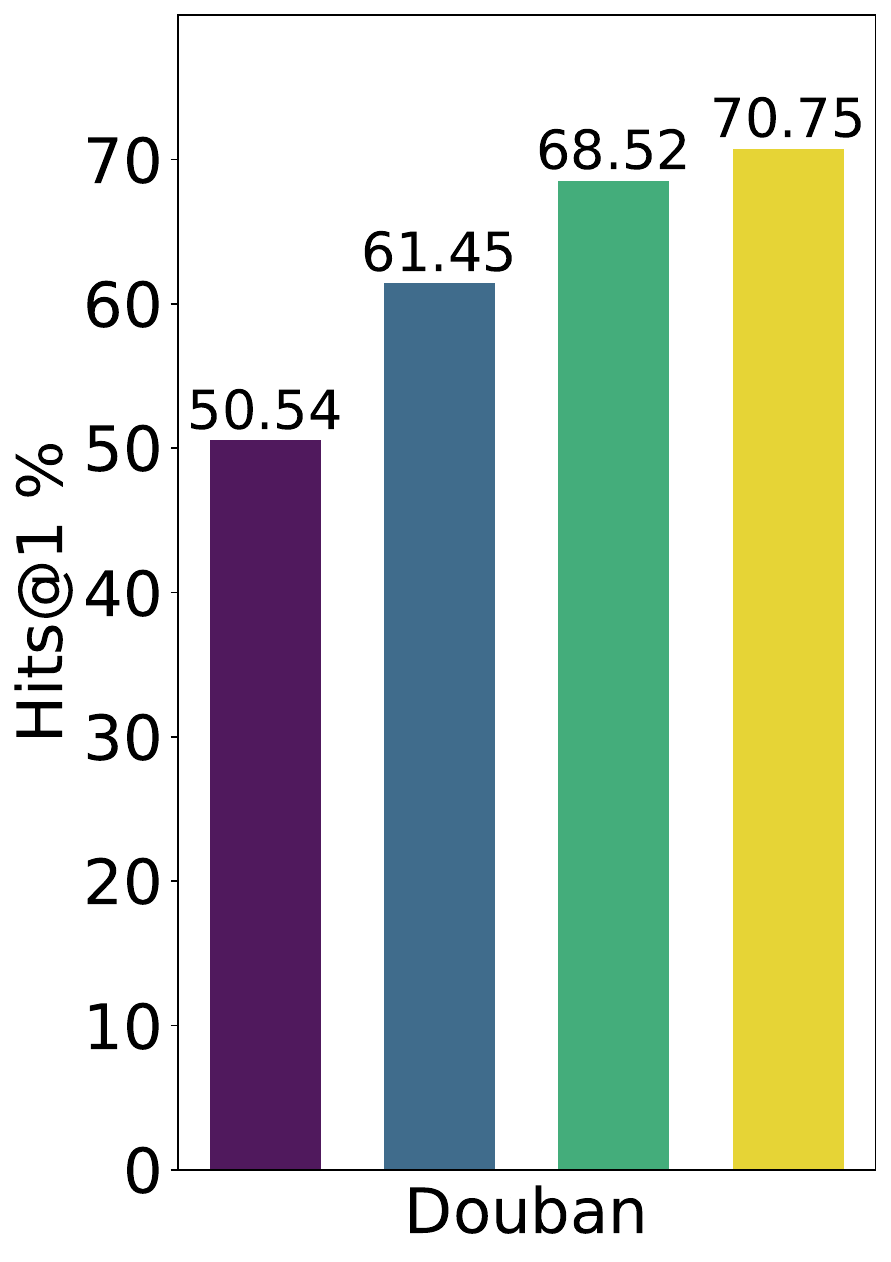} & 
         \hspace{-4mm}\includegraphics[height=40mm]{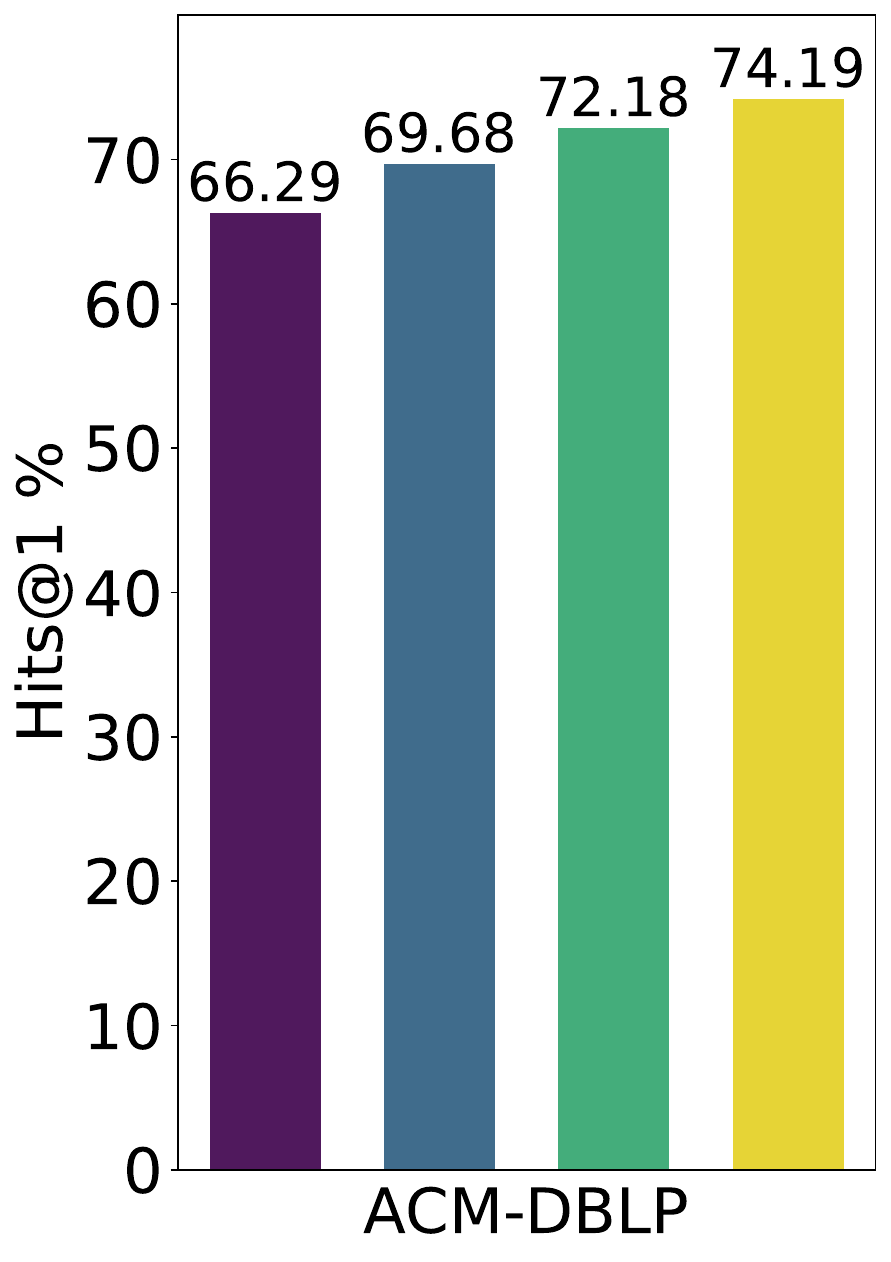} & 
        \hspace{-4mm}\includegraphics[height=40mm]{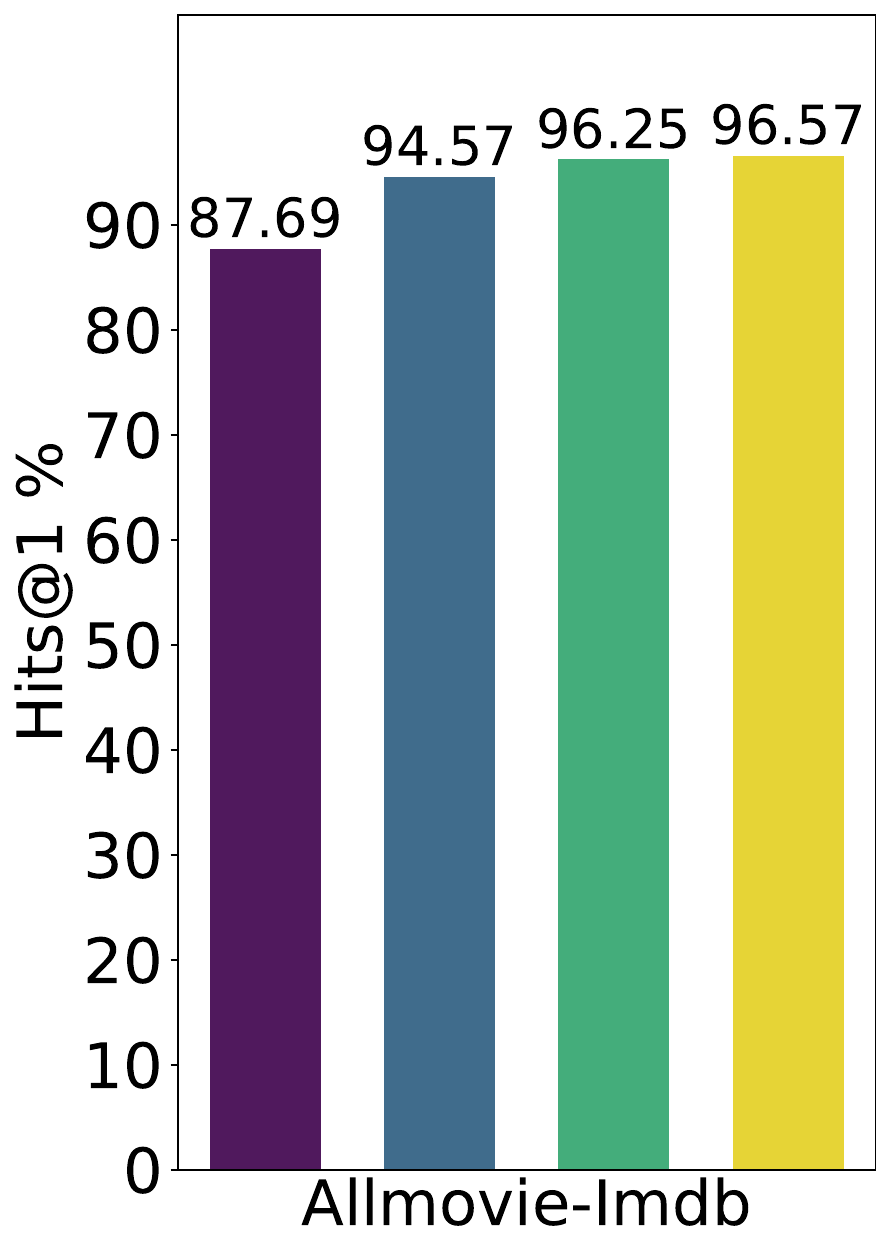}\\
    \end{tabular}
    \caption{
        Ablation study on three real-world datasets.
    }
    \label{fig:ablation}
\end{figure}

\subsubsection{Ablation Study}
To test the effectiveness of each module, we progressively remove \texttt{Combine}, \texttt{NUM}, and feature transformation in \texttt{GRAFT}. 
The resulting model variants are as follows.
For \texttt{CombAlign w/o C}, we make predictions by $\mathbf{T}_{GW}$ because it generally achieves higher accuracy by combining the prior knowledge from \texttt{WL}. 
\texttt{CombAlign w/o C+NUM} further eliminates the \texttt{NUM} and \texttt{WL} modules, therefore, the uniform marginal is adopted. 
The \texttt{CombAlign w/o C+NUM+FT} variant removes feature transformation as well as feature propagation, resulting in a model theoretically weaker than \texttt{SLOTAlign}. 

We show the results of the ablation study in Figure~\ref{fig:ablation}. 
It is clear that all proposed modules have non-negligible contributions to model accuracy. 
More specifically, the \texttt{Combine} module boosts the accuracy on Douban (resp. ACM-DBLP) by more than two points. 
The non-uniform marginal and feature transformation significantly increase the model performance. 
For \texttt{CombAlign w/o C+NUM+FT}, 
the accuracy is close to \texttt{SLOTAlign}, which coincides with the fact that the two models have minor differences.

\begin{figure*} 
\centering
\begin{minipage}{0.25\linewidth}
\centering
\includegraphics[height=40mm]{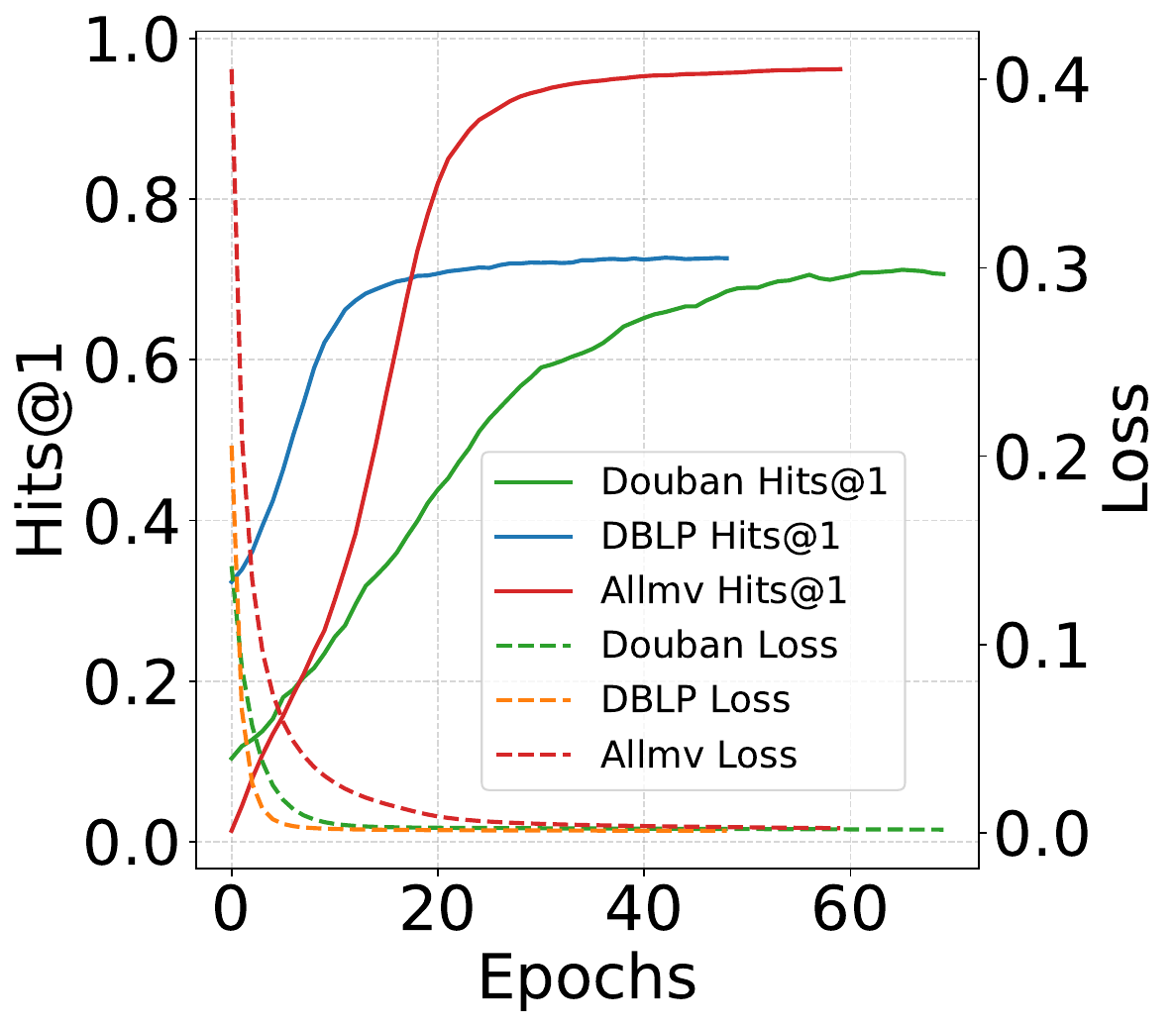}
\caption{Convergence Analysis.} \label{fig:converge}
\end{minipage}
\begin{minipage}{0.7\linewidth}
\centering
\begin{tabular}{ccc} 
        \hspace{-4mm}\includegraphics[height=40mm]{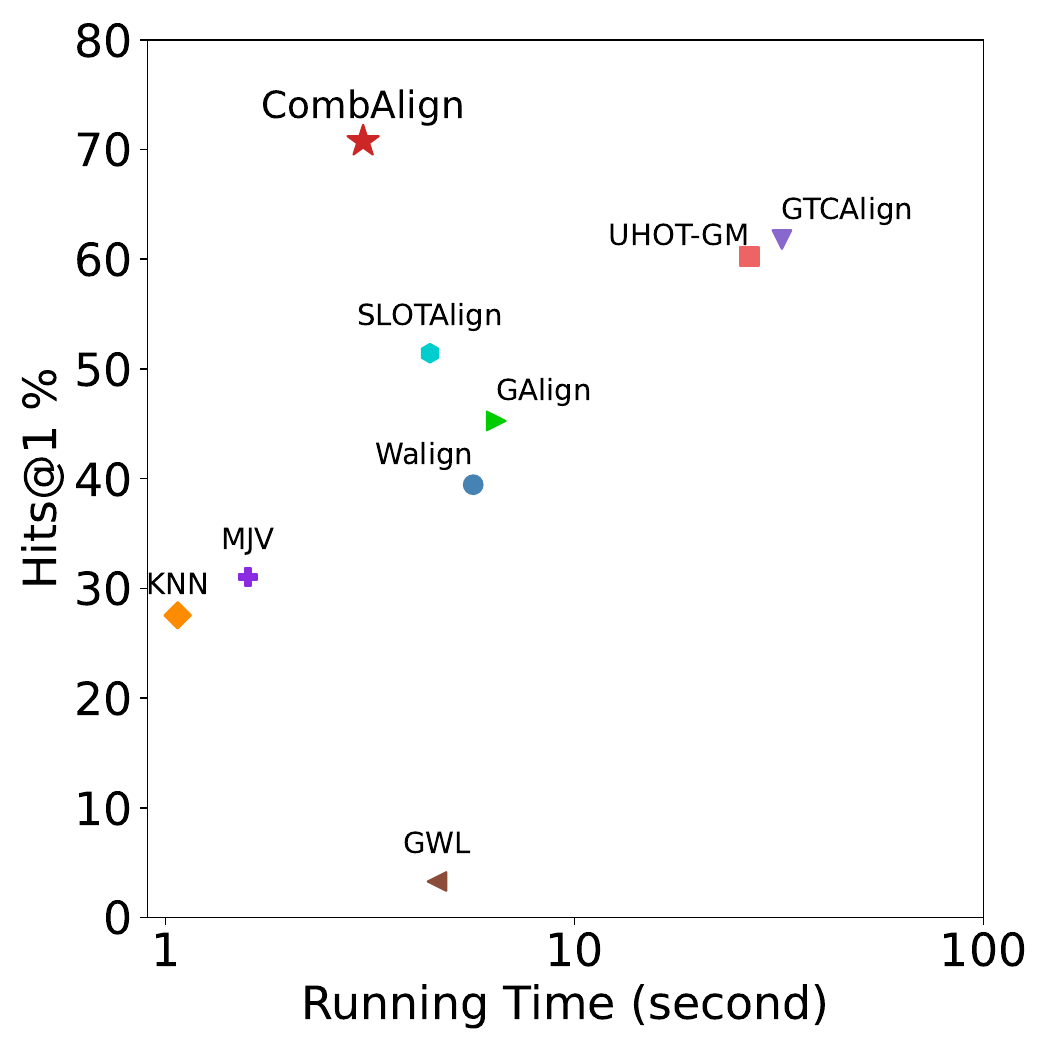} & 
         \hspace{-4mm}\includegraphics[height=40mm]{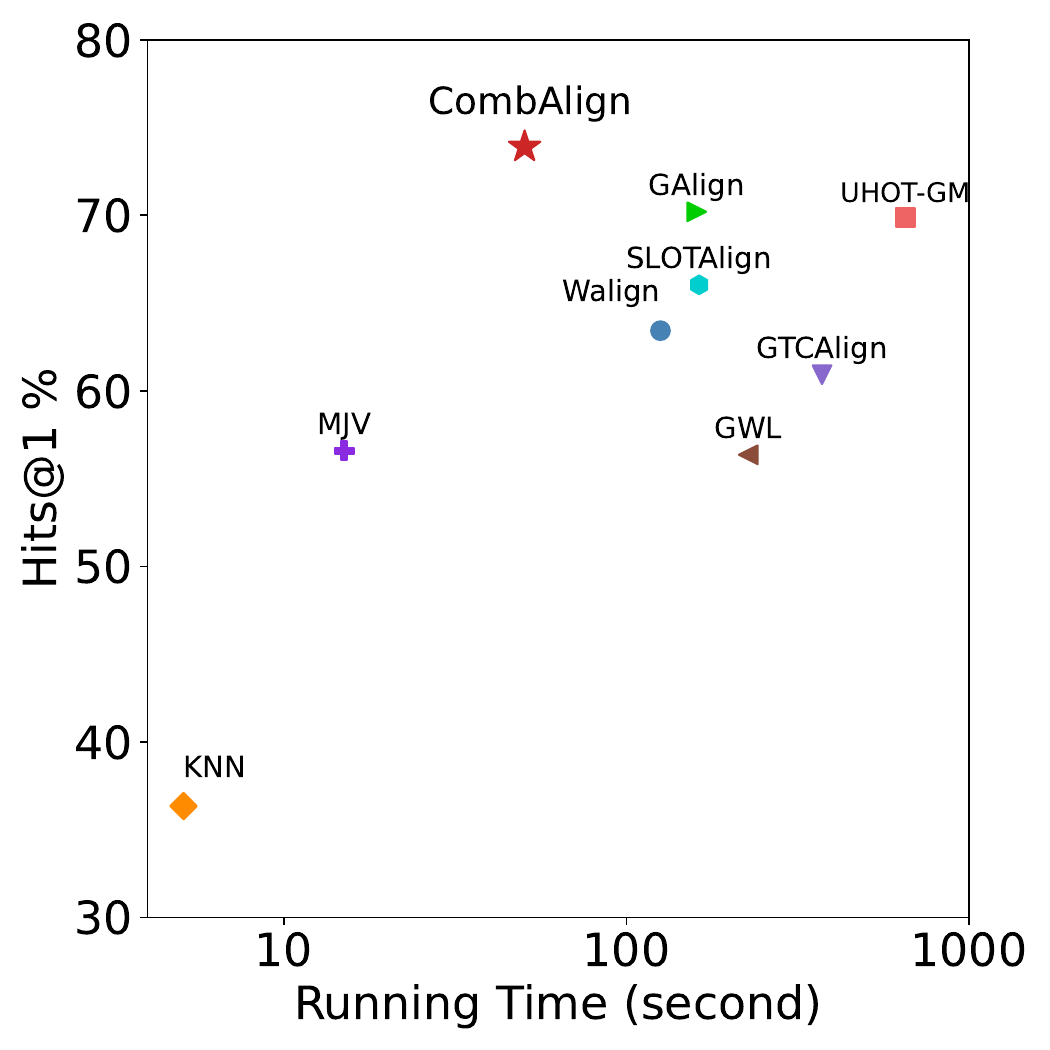} &
         \hspace{-4mm}\includegraphics[height=40mm]{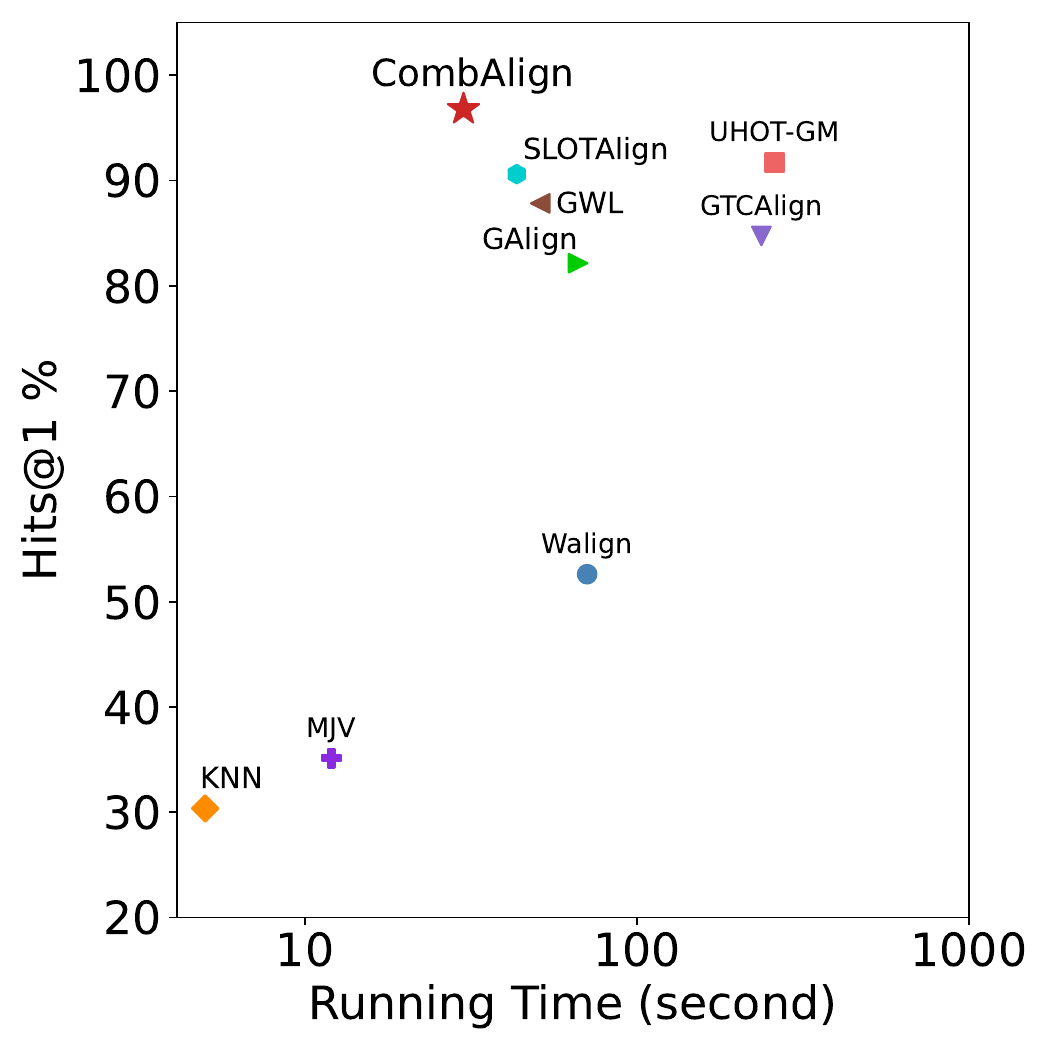}
         
         \\
        
       \hspace{0mm} (a) {Douban} &
       \hspace{0mm} (b) {ACM-DBLP}  &
       \hspace{0mm} (c) {Allmovie-Imdb} 

    \end{tabular}
    \caption{
        Running time vs. Hits@1 of different methods.
    }
    \label{fig:runtime}
\end{minipage}
\end{figure*}

\subsubsection{Further Evaluation of Different Modules} 
We conduct a more extensive investigation of the proposed modules. 

\noindent \texttt{GRAFT} \textbf{with different GNNs}. 
According to the theoretical results of Corollary~\ref{col:GNN}, more expressive GNNs should lead to better model accuracy. 
As shown in Table~\ref{tab: diff-gnn}, we employ the \texttt{CombAlign w/o C+NUM} variant to test the lightweight GCN, GCN, and GIN (Cf. Section~\ref{sec:model-GRAFT}). 
We also include an efficient Graph Transformer~\cite{wu2023sgformer} with even more representation power. 
The result nicely matches our theoretical conclusion. 

Note that by default the \texttt{GRAFT} module uses GCN (e.g., for Table~\ref{table:result}). 
We also point out that the GIN variant in Table~\ref{tab: diff-gnn} has already outperformed all baselines shown in Table~\ref{table:result}. 
We believe there is still room for further improvements with more powerful GNNs (and Graph Transformers).

\noindent \textbf{Baselines with the} \texttt{Combine} \textbf{module}. 
Next, we validate our \texttt{Combine} module for improving existing OT-based solutions. 
We substitute the \texttt{GRAFT} module with \texttt{GWL}, \texttt{SLOTAlign} and \texttt{UHOT-GM}, and report the final prediction with \texttt{Combine}. 
As shown in Table~\ref{tab:EL-other ot-based},
all methods achieve a significant enhancement of Hits@1, further validating the effectiveness of our \texttt{Combine} module. 

\noindent \textbf{One-to-one matching property of} \texttt{CombAlign}.  
In Table~\ref{tab:ratio_combAlign}, we show the ratio of the remaining set compared to the ground truth set after eliminating one-to-many predictions for all model variants of \texttt{CombAlign}. 
This step is similar to that of Figure~\ref{fig:missingrate} (see Section~\ref{sec:model-EL}), in which we demonstrate the ratio of one-to-many predictions for our baselines. 
Consistent with our analysis, \texttt{CombAlign} manages to return a set of node pairs with one-to-one matching.
Other variants of \texttt{CombAlign} also ensures higher similarity between matched pairs compared to the baselines.

\begin{table}
\centering
\caption{Improving other OT-based models w/ \texttt{Combine} (Hits@1).}
\begin{tabular}{|p{22mm}|ccc|}
\hline
Methods & Douban & ACM-DBLP &Allmovie-Imdb \\
\hline

\texttt{GWL} & 3.29  & 56.36 & 87.82\\
\hline
\texttt{GWL+C} & 5.81  & 63.46 & 90.07  \\
\hline
\texttt{SLOTAlign} & 51.43  & 66.04  & 90.60 \\
\hline
\texttt{SLOTAlign+C} & 60.01  & 69.20  & 91.03  \\
\hline
\texttt{UHOT-GM} & 60.23  &69.89  & 91.73\\
\hline
\texttt{UHOT-GM+C} & 63.17  &71.32   & 92.45  \\
\hline
\end{tabular}
\label{tab:EL-other ot-based}
\end{table}

\subsubsection{Convergence Analysis} \label{sec:efficiency-comparison}
We simultaneously visualize the prediction accuracy and the model loss (i.e., Equation~\ref{eqn:GWD}) on three real-world datasets at different epochs, as shown in Figure~\ref{fig:converge}. 
With the increase of epochs, the loss gradually decreases to almost approaching zero, meanwhile, the Hits@1 value gradually converges.  

\subsubsection{Model Efficiency and Scalability} 
For model efficiency, we compare the running time of all methods in Figure~\ref{fig:runtime} on three well-adopted datasets. 
It can be observed that \texttt{KNN} and \texttt{MJV} achieve the highest efficiency but suffer from poor performance. 
Our proposed \texttt{CombAlign} has the best efficiency among all learning-based methods while maintaining the highest accuracy. This can be attributed to the strong expressive power of our model, which facilitates faster convergence. 
In terms of \texttt{CombAlign}, the GW learning process takes the majority of the time, while the additional costs of \texttt{WL} and \texttt{Combine} modules are negligible.

We also incorporate two larger datasets (i.e., CS and Physics) for scalability analysis compared to the SOTA baselines~\cite{galign, slotalign, dhot} (Figure~\ref{fig:scalability}). 
(We omit \texttt{KNN} and \texttt{MJV} for better readability.)
For the CS dataset, it can reside in a GPU of 24GB memory, while for Physics, the $O(n^2)$ space cost for the alignment probability matrix exceeds the GPU memory size for both embedding and OT-based methods, and we conduct experiments with the CPU. 
On both datasets, the optimized \texttt{CombAlign} achieves the best results for efficiency and accuracy.

\begin{table}
\centering
\caption{The ratio of the remaining set over ground truth after eliminating one-to-many predictions.}
\begin{tabular}{|p{26mm}|ccc|}
\hline
\texttt{CombAlign} & Douban & ACM-DBLP &Allmovie-Imdb \\
\hline
 - w/ \texttt{C} & \textbf{100\%} & \textbf{100\%}  &\textbf{100\%}  \\
\hline
 - w/o \texttt{C} & 94.36\% &  88.36\% & 98.54\% \\
\hline
 - w/o \texttt{C+NUM} & 94.18\% & 86.50\% & 97.60\% \\
\hline
 - w/o \texttt{C+NUM+FT} & 91.32\% & 83.27\% & 96.25\% \\
\hline
\end{tabular}
\label{tab:ratio_combAlign}
\end{table}

\begin{figure}[!htbp]
\centering
\begin{tabular}{cc} 
    \centering
    \hspace{-4mm}\includegraphics[ height=40mm]{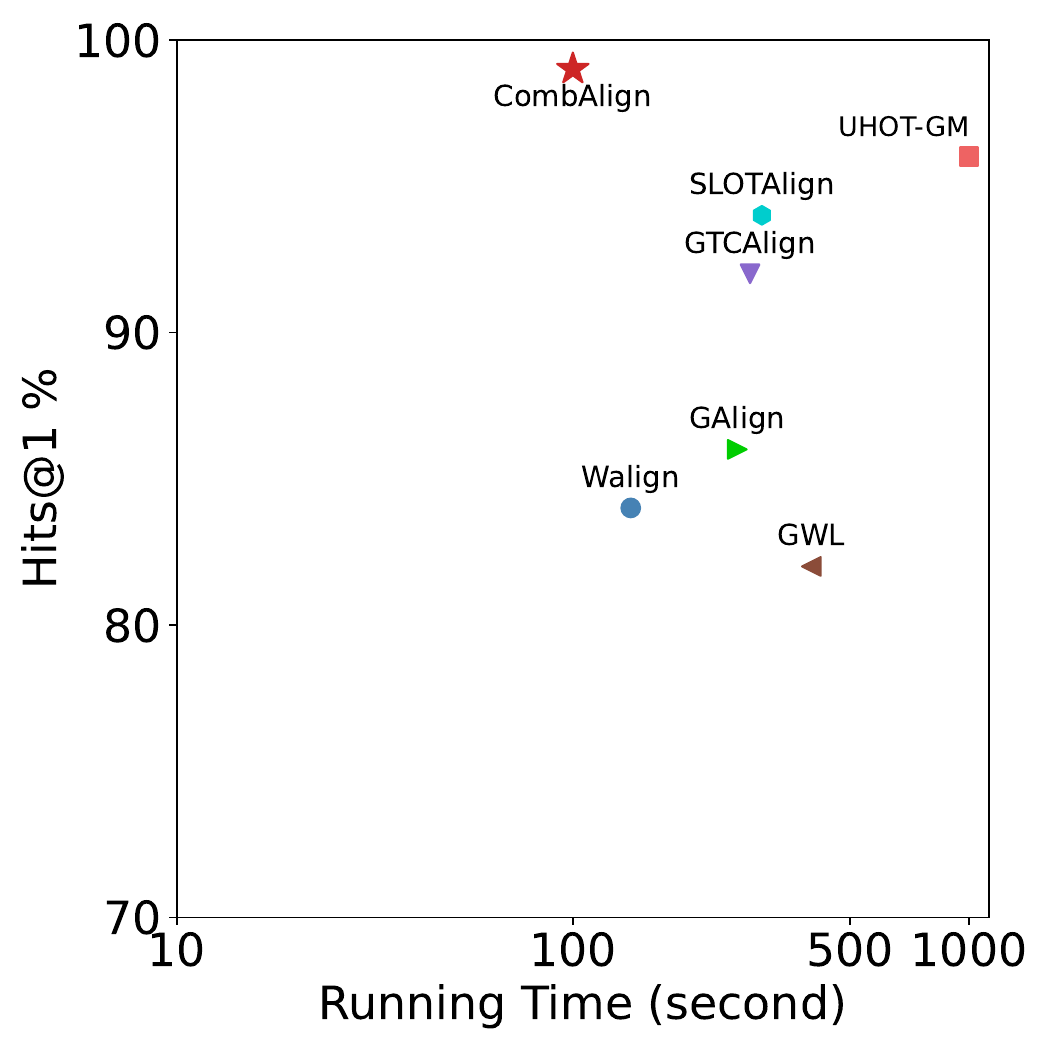}&
    \hspace{-4mm}\includegraphics[ height=40mm]{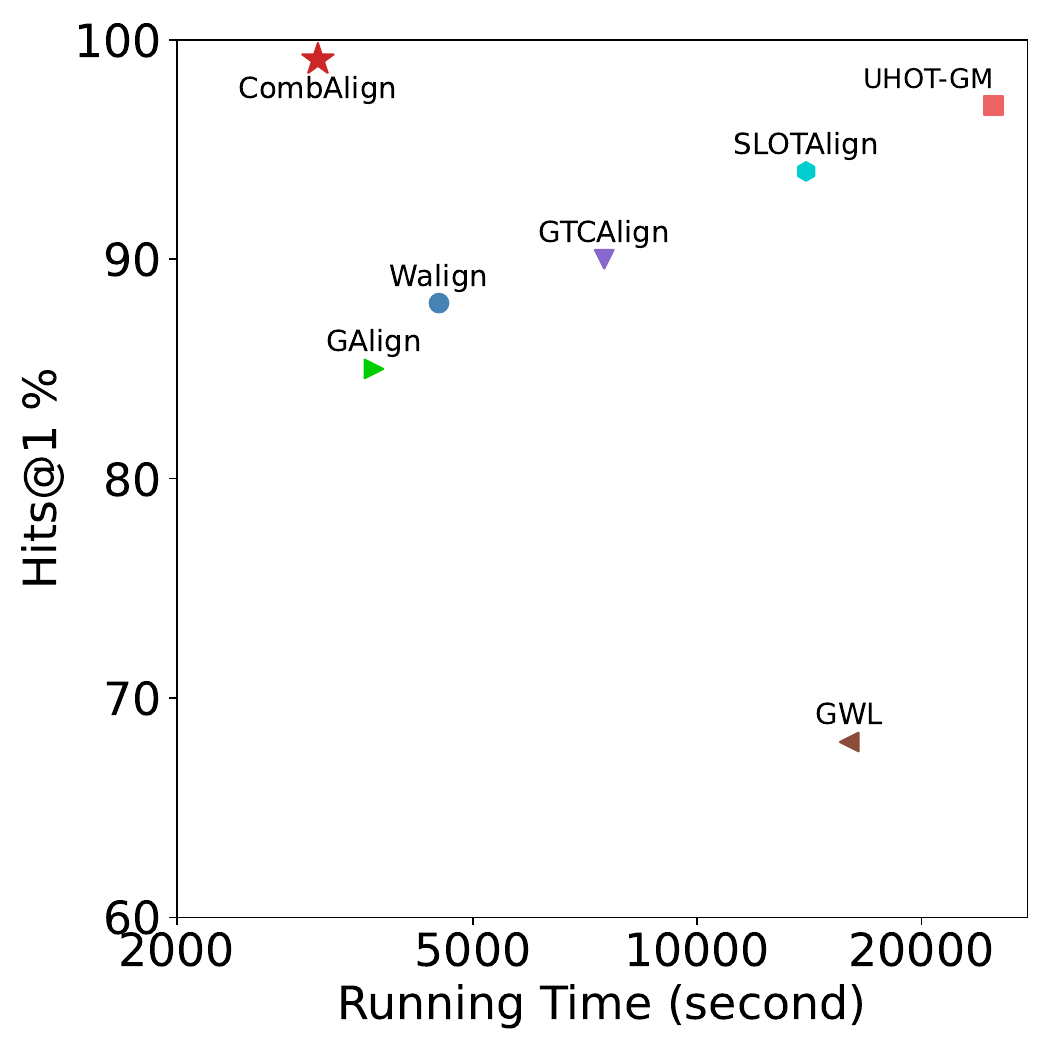}
    \\ 
  
    {(a) CS} & {(b) Physics}  \\
\end{tabular}
\caption{Scalability analysis.}
\label{fig:scalability}
\end{figure}

\begin{figure}[!htbp]
\centering
\begin{tabular}{cc} 
    \centering
    \hspace{-4mm}\includegraphics[ height=35mm]{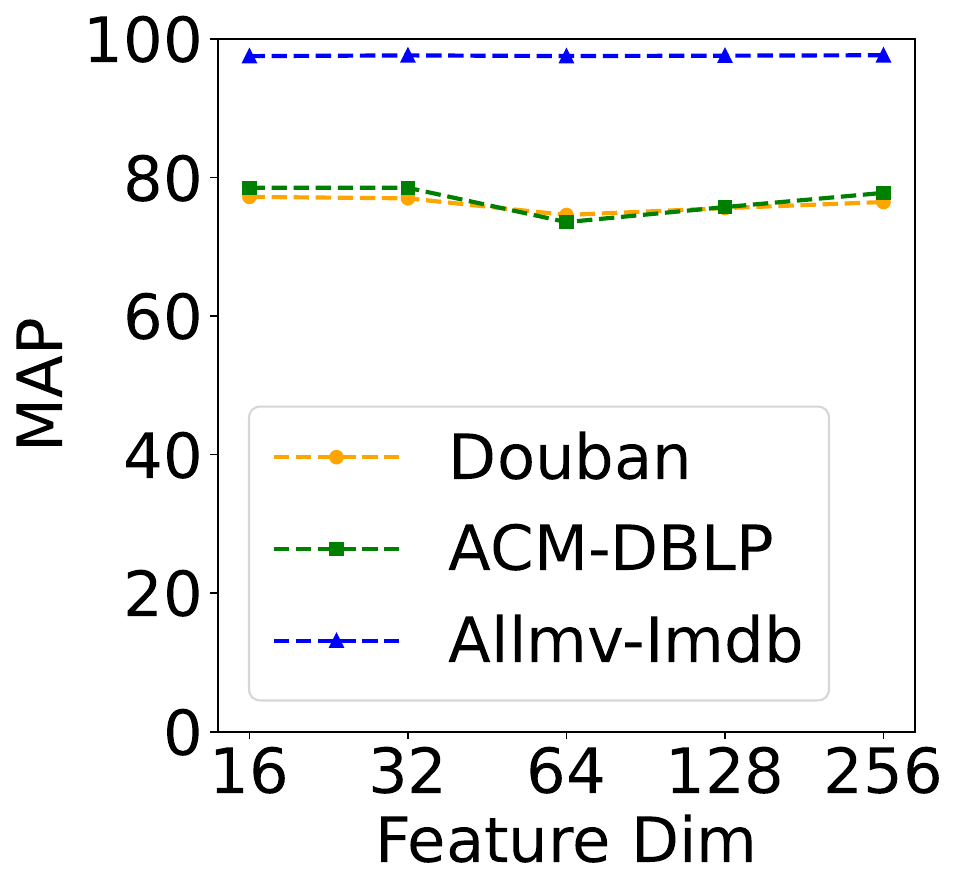} & 
    \hspace{-4mm}\includegraphics[ height=35mm]{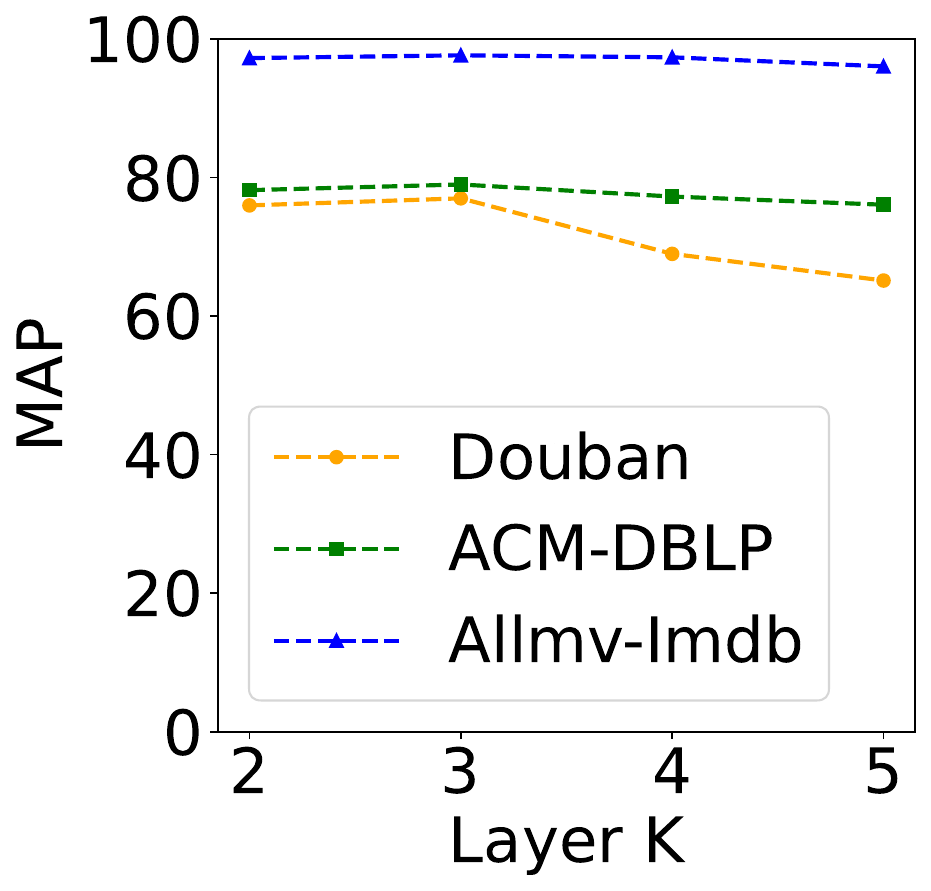}\\

\end{tabular}
\caption{Impact of hyper-parameters on \texttt{CombAlign}.}
\label{fig:sensitity}
\end{figure}
\subsubsection{Sensitivity Analysis}

To examine the impact of different hyper-parameters, on three real datasets including Douban, ACM-DBLP and Allmovie-Imdb, we analyze the performance of \texttt{CombAlign} with different feature dims $d \in \left \{16,32,64,128,256  \right \} $ and the number of graph convolution layers $K \in \left\{ 2,3,4,5\right\}$, as illustrated in Figure~\ref{fig:sensitity}.
In general, the model performance is stable w.r.t. $d$. Meanwhile, the accuracy declines when $K$ exceeds 3, consistent with the oversmoothing phenomenon of GNNs.

\section{Related Work}\label{related}
\noindent \textbf{Unsupervised Graph Alignment}. 
Classic graph alignment methods, such as \texttt{EigenAlign}~\cite{feizi2019spectral}, formulate the problem as a Quadratic Assignment Problem~\cite{feizi2019spectral,karakasis2021joint,peng2010new} which consider both matches and mismatches and solves it by spectral decomposition of matrices. 
Meanwhile, \texttt{LREA}~\cite{nassar2018low} addresses the problem by solving a maximum weight bipartite matching problem on a low-rank approximation of a node-similarity matrix, and \texttt{GRASP}~\cite{hermanns2023grasp} treats the problem as a special case of finding a mapping between functions on graphs with the linear assignment algorithm. 

In recent years, the research focus has shifted towards learning-based methods, which predict an alignment matrix indicating the matching probability for every node pair across two graphs. 
Existing unsupervised solutions can be roughly divided into two categories. 
The first category, referred to as \emph{“embed-then-cross-compare”}~\cite{galign,walign,gtcalign,liu2016aligning,sun2023towards,sana}, tackles the problem by first generating node representation for both graphs, e.g., using GNNs. 
Then, the alignment probability of two nodes is computed by specific similarity measures based on their embeddings, for example, the dot product~\cite{gtcalign, galign}.  
In particular, \texttt{GAlign}~\cite{galign} incorporates the idea of data augmentation into the learning objective to obtain high-quality node embeddings. 
\texttt{GTCAlign}~\cite{gtcalign} further simplifies \texttt{GAlign} by using a GNN with {randomized parameters} for node embedding computation.  
Another recent work~\cite{walign} conducts unsupervised graph alignment via the idea of generative adversarial network (GAN).

Instead of designing sophisticated alignment rules based on node embeddings, the second category~\cite{gwl,slotalign,zeng2023parrot,dhot,petric2019got,chen2020graph} of work is known as \emph{optimal transport (OT)}-based approaches. 
They share the well-defined objective of minimizing Gromow-Wasserstein discrepancy, which predicts the alignment probabilities given the transport cost between node pairs. 
These methods mainly differ in the specific form of the \emph{learnable} transport cost. 
\texttt{GWL}~\cite{gwl} is the first work following this paradigm with predefined cost and incorporates Wasserstein discrepancy and reconstruction loss for regularization. 
As the state-of-the-art approach, \texttt{SLOTAlign}~\cite{slotalign} extends \texttt{GWL} by multi-view structure modeling, which includes the linear combination of adjacency information, node features, and {feature propagation}. 
The most recent work, \texttt{UHOT-GM}~\cite{dhot}, generalizes this idea with the notion of multiple and cross-modal alignment.

\noindent \textbf{Semi-Supervised Graph Alignment}. Unlike unsupervised graph alignment, with a set of anchor node pairs as input, the main idea of semi-supervised methods is to fully utilize the anchor information. 
A line of consistency-based algorithms~\cite{zhang2016final,zhang2021balancing,singh2008global,koutra2013big} is proposed to first define the neighborhood topology and attribute consistency followed by devising a procedure to propagate the anchor information across graphs, e.g., via Random Walk with Restart (RWR). 
Follow-up works resort to embedding-based approaches~\cite{zhang2020nettrans,yan2021bright,heimann2018regal,zhou2021attent,hong2020domain,liu2023wl,ning2023graph,huynh2021network}, i.e., by learning node embeddings to make the anchor pairs close while preserving the graph structure in the latent space. 
A very recent work~\cite{zeng2023parrot} incorporates the idea of optimal transport to better model the alignment consistency. 
Along with the RWR-based transport cost design, state-of-the-art performance is achieved. 

\noindent \textbf{Knowledge Graph Entity Alignment}. A similar problem has been extensively studied~\cite{wang2018cross,  sun2018bootstrapping,  zeng2021comprehensive}, which aligns the entities across knowledge graphs~\cite{zhang2022benchmark}. 
For this problem setting, both topological structures and the semantics of features are important. Building upon optimal transport,  \texttt{FGWEA}~\cite{tang2023fused} proposes a fused Gromov-Wasserstein framework for unsupervised knowledge graph entity alignment, which effectively integrates structural and semantic information to achieve robust performance.
Recent advance~\cite{jiang2024unlocking} has explored leveraging large language models (LLMs) to enhance knowledge graph entity alignment with richer textual information from diverse corpora. A more comprehensive review can be referred to~\cite{zeng2023matching,zhu2024survey}.

\section{Conclusion and Future Work}\label{conclusion}

In this paper, we formally investigate the model expressiveness for unsupervised graph alignment from two theoretical perspectives. 
We characterize the model's discriminative power as correctly distinguishing matched and unmatched node pairs across graphs and study the model's capability of guaranteeing important matching properties including one-to-one node matching and mutual alignment. 
Then, we propose a hybrid model named \texttt{CombAlign} to combine the advantages of both optimal transport and embedding-based solutions, which leads to provably more discriminative power. 
Our method is further empowered by ensembling both OT and embedding-based predictions with a traditional algorithm-inspired strategy, i.e., maximum weight matching, to guarantee the matching properties. 
Extensive experiments demonstrate significant improvements in alignment accuracy and the effectiveness of the proposed modules. 
Moreover, our theoretical analysis of expressive power is confirmed. 

Although our \texttt{CombAlign} algorithm demonstrates remarkable improvements in terms of model performance and efficiency, 
the scalability issue with respect to the running time as well as the space cost remains a challenge for both categories of learning-based approaches. 
We leave this as the main focus of our future research.

\section*{Acknowledgments}
The authors would like to thank Prof. Lei Chen, Prof. Xiaofang Zhou, and Prof. Zhi Jin for fruitful discussions, and Haoran Cheng, Prof. Dixin Luo, and Prof. Hongteng Xu for their help in providing the baseline codes. 
This work is supported by NSFC under grants 62202037, 62372031, and 62372034.

\bibliographystyle{IEEEtran}
\bibliography{reference}

\end{document}